\documentclass[10pt]{article}
\usepackage[margin=0.875in]{geometry}
\usepackage{times}
\usepackage[authoryear,round]{natbib}
\usepackage{microtype}
\usepackage{graphicx}
\usepackage{subcaption}
\usepackage{booktabs}
\usepackage[colorlinks=true,allcolors=blue!50!black]{hyperref}
\usepackage{amsmath}
\usepackage{amssymb}
\usepackage{mathtools}
\usepackage{amsthm}
\usepackage[capitalize,noabbrev]{cleveref}
\usepackage{multirow}
\usepackage{xcolor}
\usepackage{tikz}
\usetikzlibrary{arrows.meta,positioning,decorations.pathreplacing}
\usepackage{enumitem}

\theoremstyle{plain}
\newtheorem{proposition}{Proposition}
\theoremstyle{definition}
\newtheorem{definition}{Definition}
\theoremstyle{remark}

\newcommand{\iris}{\textsc{Iris}}
\newcommand{\jeva}{\textsc{Jeva}}
\newcommand{\onejev}{\textsc{OneJev}}
\newcommand{\enc}{\phi}
\newcommand{\dec}{\mathcal{D}}

\begin{document}

\title{Seeing the Options: What Should a Visual Candidate See?}
\author{\href{https://openreview.net/profile?id=~Fengxin_Li3}{Fengxin Li}\\Harbin Institute of Technology, Shenzhen\\\texttt{25b951129@stu.hit.edu.cn}}
\date{}
\maketitle

\begin{abstract}
Agents often choose among candidates that are images: elements of a screenshot, product photos, charts. A single-pass decision model either encodes the request and all candidates as one sequence, making its answer depend on candidate order, or encodes each candidate alone, which keeps order invariance and reusable encodings but hides the request from the candidate. We ask where the joint encoder's advantage comes from: from candidates seeing the request, or from candidates seeing each other while they are encoded. With one backbone and the same fine-tuning steps on screenshot crops of Multimodal-Mind2Web, the answer is mostly the request. Continuing each candidate from a forked cache of the encoded request recovers 11 to 13 of the 12 to 17 points that independent encoding loses to joint encoding, at the price of cross-request reuse, and we prove that order invariance stays exact whenever candidates never see each other. A joint encoder retrained with its candidates isolated from one another during encoding loses 1.1 and 0.4 ranking points in one paired run (0.7 and 0.3 against the mean of five joint runs), and permutation-equivariant interactions among candidates, over pooled vectors or inside the encoder, gain nothing beyond continued training. At a matched abstention operating point the state-conditioned model is 0.6 and 1.0 points below the joint encoder (mean of four runs) with a flip rate of zero, against 8--10\% for the joint encoder; on AndroidControl crops (three and five runs) it is 2.5 to 4 ranking points below. Finally, the abstention threshold drifts: the choice term of the loss ignores the score level, and the coverage term pins it only weakly once positives saturate; a quantile threshold aligns operating points.
\end{abstract}

\noindent Code, data and checkpoints: \url{https://github.com/lllfx/Iris}.

\section{Introduction}
\label{sec:intro}

Agents repeatedly make one-step choices among candidates: which element of a screenshot to click~\citep{mind2web,seeact}, which element of an app screen to tap~\citep{androidcontrol}, which retrieved image to keep~\citep{rerank2026}, or that none of them fits. Increasingly the candidates are images. A model for such choices should return a probability over the candidates and over ``none of them'' in one forward pass, should not change its answer when the candidates are listed in another order, and should not re-encode candidates that recur, as the elements of a page do at every step of a task that stays on it. Such single-pass models are called System One models~\citep{typesafe2026,jeva2026}.

The common way to build a decision model is joint encoding: the request and all candidates are serialised into one sequence, and the answer is read from the logits of answer letters. Large language models are known to change their multiple-choice answers when the options are reordered~\citep{pezeshkpour2024,pride}, and fine-tuning for decisions does not remove this dependence~\citep{runoff}. Multimodal System One models keep the candidates as text in letter slots~\citep{onejev2026}, and listing image candidates in such a prompt makes each request re-encode every candidate~\citep{rerank2026}. Inference-time remedies average over orders~\citep{pride} or compare a consideration set in every position~\citep{runoff}; nothing computed for a candidate can be reused.

At the other extreme, independent encoding, as in dual-encoder retrieval~\citep{dpr} and in \jeva{}~\citep{jeva2026}, computes each candidate's vector from the candidate alone and is order-invariant and cacheable by construction, but a candidate scored without the request cannot know which of two similar elements the instruction refers to.

\begin{figure}[t]
\centering
\includegraphics[width=3.25in]{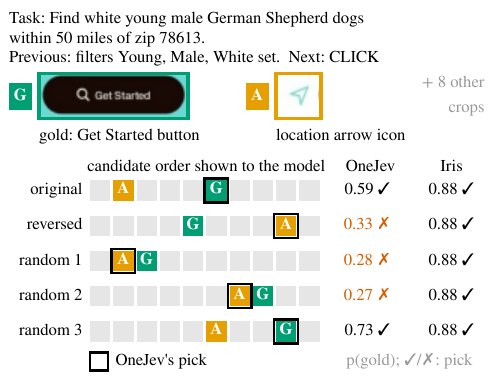}
\caption{A joint encoder's answer depends on where the candidates are listed; \iris{}'s does not. One cross-domain Mind2Web step with ten candidate crops: \onejev{}, fine-tuned on the same steps, picks the gold ``Get Started'' button (G) in two of five candidate orders and a location-arrow icon (A) in the other three. \iris{} (state-conditioned) gives the gold the same probability, 0.88, in every order. Numbers are each model's probability of the gold candidate.}
\label{fig:example}
\end{figure}

These designs differ in one respect: what a candidate's representation is allowed to depend on. It may depend on the candidate only, on the candidate and the request, or on the candidate, the request and the other candidates. Each step along this axis gives up a property. Seeing the request gives up reuse across requests; seeing the other candidates gives up order invariance unless the interaction is permutation-equivariant. The question this paper asks is where a joint encoder's accuracy comes from when the candidates are images: from candidates seeing the request, or from candidates seeing each other while they are encoded, and what each step costs in invariance and reuse.

We build \iris{}, a decision model on a vision--language backbone whose candidates may be images, text or both, and whose decision module is \jeva{}'s: the request state reads a memory of candidate vectors through attention, and a coverage gate assigns a probability to ``none of the candidates''. \iris{} implements two candidate encoders. The independent encoder computes each candidate's vector from the candidate alone. The state-conditioned encoder encodes the request once, keeps its attention and recurrent caches, and continues each candidate from a fork of that cache, so that each candidate sees the request and never another candidate.

Seeing the request is most of what matters. On a Mind2Web benchmark in which every candidate is the element's screenshot crop, letting the candidate see the request adds 11 to 13 points over independent encoding and brings the model to within about one point of a joint encoder fine-tuned on the same steps at a matched operating point (after checkpoint selection and an abstention stage the joint encoder lacks), with a flip rate of zero against 8--10\%; on AndroidControl the gap is 2.5 to 4 points in ranking accuracy. The comparison with independent encoding is controlled (same initialisation, data, steps and final weights); the one with the joint encoder is between systems. Retrained with its candidates isolated during encoding (its answer token still reads them all), the joint encoder loses 1.1 and 0.4 ranking points (0.7 and 0.3 against five joint runs; one isolated run), and permutation-equivariant interactions among our candidates gain no more than continued training. Only the independent encoder reuses candidates across requests.

A second finding concerns any gate that compares candidate scores with a threshold. The choice term of the loss ignores a common shift of the scores, and the coverage term pins their level only weakly: within one run, abstention on null steps swings between 27 and 94\% while ranking accuracy stays at 90--93\%. A quantile threshold aligns operating points, and we compare models at it throughout.

Our contributions are:
\begin{itemize}[leftmargin=1.2em,itemsep=1pt,topsep=2pt]
  \item A controlled account of where a joint encoder's accuracy over independent encoding comes from, along the axis of what a candidate may see. With one backbone and the same fine-tuning steps on visual-candidate benchmarks from Multimodal-Mind2Web and AndroidControl, seeing the request recovers 11--13 of 12--17 points on Mind2Web; candidates seeing each other during encoding is worth 0.3--1.1 ranking points, measured both by isolating the candidates of a retrained joint encoder and by adding equivariant interactions to ours (\cref{sec:bench,sec:main,sec:gap}).
  \item An order-invariant design that follows from it: a proof that order invariance holds exactly whenever candidates do not see each other and the decision module is symmetric, and a state-conditioned encoder that forks the cache of the encoded request, at one state pass per request and 1.3 times the joint latency; for the independent encoder, a closed-form cost model of candidate reuse within 6\% of measured latency for $K\ge64$ and partial reuse (\cref{sec:method,sec:reuse}).
  \item A diagnosis of the abstention gate: the choice term of the loss ignores the score level and the coverage term pins it only weakly, so the threshold drifts between checkpoints and seeds; a quantile threshold aligns operating points for comparing models (\cref{sec:null}, \cref{app:gate}).
\end{itemize}

% Three ways to encode K candidates: what each candidate's vector is allowed to see.
\begin{figure*}[t]
\centering
\begin{tikzpicture}[
  font=\footnotesize, >=stealth,
  tok/.style={draw, rounded corners=1.5pt, minimum height=5mm, minimum width=7mm, inner sep=2pt, align=center},
  st/.style={tok, fill=gray!15},
  cd/.style={tok, fill=green!12},
  vec/.style={draw, rounded corners=1.5pt, minimum height=5mm, inner sep=2pt, fill=orange!12, align=center},
  cache/.style={draw, dashed, rounded corners=2pt, fill=blue!6, inner sep=2pt, align=center},
  lab/.style={font=\footnotesize\itshape, gray!80!black, align=center},
  arr/.style={->, thick}
]
% ---------- (a) joint ----------
\begin{scope}[xshift=0cm]
  \node[st, minimum width=12mm] (s) at (0,0) {state};
  \node[cd] (c1) at (1.1,0) {$c_1$};
  \node[cd] (c2) at (1.85,0) {$c_2$};
  \node[lab] at (2.45,0) {$\cdots$};
  \node[cd] (ck) at (3.05,0) {$c_K$};
  \draw[decorate, decoration={brace, amplitude=3pt, mirror}] (-0.65,-0.35) -- (3.45,-0.35)
     node[midway, below=3pt, lab] {one sequence; letters A--Z read out};
  \node[vec] (o) at (1.4,1.3) {$p(\text{A}),\,p(\text{B}),\dots$};
  \draw[arr] (1.4,0.28) -- (o.south);
  \node[font=\bfseries\footnotesize, align=center] at (1.4,-1.7) {(a) joint (\onejev{})\\[-1pt] \normalfont every candidate\\[-1pt] \normalfont sees everything};
\end{scope}
% ---------- (b) independent ----------
\begin{scope}[xshift=4.55cm]
  \node[st, minimum width=12mm] (s) at (0,0) {state};
  \node[cd] (c1) at (1.25,0) {$c_1$};
  \node[cd] (c2) at (2.0,0) {$c_2$};
  \node[lab] at (2.6,0) {$\cdots$};
  \node[cd] (ck) at (3.2,0) {$c_K$};
  \node[vec] (z) at (0,1.2) {$z_0$};
  \node[vec] (u1) at (1.25,1.2) {$u_1$};
  \node[vec] (u2) at (2.0,1.2) {$u_2$};
  \node[vec] (uk) at (3.2,1.2) {$u_K$};
  \draw[arr] (s) -- (z); \draw[arr] (c1) -- (u1); \draw[arr] (c2) -- (u2); \draw[arr] (ck) -- (uk);
  \node[cache, minimum width=26mm] at (2.2,2.2) {option memory:\\reused across requests};
  \draw[gray, dashed] (2.2,1.5) -- (2.2,1.75);
  \node[font=\bfseries\footnotesize, align=center] at (1.6,-1.7) {(b) independent (\jeva{}, \iris{})\\[-1pt] \normalfont a candidate\\[-1pt] \normalfont sees only itself};
\end{scope}
% ---------- (c) state-conditioned ----------
\begin{scope}[xshift=9.25cm]
  \node[st, minimum width=12mm] (s) at (0,0) {state};
  \node[cache, text width=15mm] (kv) at (0,1.2) {state cache\\(computed\\once)};
  \draw[arr] (s) -- (kv);
  \node[cd] (c1) at (2.0,0.35) {$c_1$};
  \node[cd] (c2) at (2.0,1.2) {$c_2$};
  \node[lab] at (2.0,1.75) {$\vdots$};
  \node[cd] (ck) at (2.0,2.3) {$c_K$};
  \node[vec] (u1) at (2.95,0.35) {$u_1$};
  \node[vec] (u2) at (2.95,1.2) {$u_2$};
  \node[vec] (uk) at (2.95,2.3) {$u_K$};
  \draw[arr] (kv.east) -- (c1.west); \draw[arr] (kv.east) -- (c2.west); \draw[arr] (kv.east) -- (ck.west);
  \draw[arr] (c1) -- (u1); \draw[arr] (c2) -- (u2); \draw[arr] (ck) -- (uk);
  \node[lab] at (1.4,2.95) {forked, never joined};
  \node[font=\bfseries\footnotesize, align=center] at (1.35,-1.7) {(c) state-conditioned (\iris{})\\[-1pt] \normalfont a candidate sees\\[-1pt] \normalfont the state only};
\end{scope}
% ---------- (d) candidate-side readouts ----------
\begin{scope}[xshift=13.6cm]
  \node[vec] (u1) at (0,0.35) {$u_1$};
  \node[vec] (u2) at (0,1.2) {$u_2$};
  \node[lab] at (0,1.75) {$\vdots$};
  \node[vec] (uk) at (0,2.3) {$u_K$};
  \node[draw, rounded corners=2pt, fill=orange!6, minimum height=24mm, minimum width=14mm, align=center, font=\footnotesize] (mix) at (1.45,1.3) {set\\attention /\\multi-vector\\(\cref{sec:gap})};
  \draw[arr] (u1.east) -- (mix.west |- u1); \draw[arr] (u2.east) -- (mix.west |- u2); \draw[arr] (uk.east) -- (mix.west |- uk);
  \node[lab] at (1.0,2.95) {equivariant, no gain};
  \node[font=\bfseries\footnotesize, align=center] at (0.9,-1.7) {(d) readouts over $\{u_i\}$\\[-1pt] \normalfont pooled candidates\\[-1pt] \normalfont see each other};
\end{scope}
\end{tikzpicture}
\caption{The designs differ only in what a candidate's vector may depend on. (a) Joint encoding: the answer depends on the order and number of candidates and nothing is reusable. (b) Independent encoding: $u_i$ is a function of $c_i$ alone, so the decision is order-invariant, unbounded in $K$ and cacheable across requests (\cref{prop:inv}). (c) State-conditioned encoding: each candidate continues from a fork of the state's cache; order invariance and null handling are unchanged, the state pass is shared within a request, and cross-request reuse is given up. (d) Readouts that let the pooled candidate vectors see each other keep every guarantee of (c) and, in our experiments, add no more than continued training (\cref{sec:gap}).}
\label{fig:designs}
\end{figure*}

\section{Problem and Design Axis}
\label{sec:problem}

A request carries a state $x$, a question $\iota$, a decision type $t$ (\textsc{choice}; \textsc{noul}, a binary yes/no judgement; or an ordinal \textsc{score}) and a candidate set $C=\{c_1,\dots,c_K\}$. For \textsc{choice} the model returns a distribution over $C\cup\{\bot\}$, where $\bot$ means that no candidate applies. A step is a \emph{null step} if its gold is $\bot$ and a \emph{step with a positive} if its candidate set contains the gold.

\begin{definition}[Segment]
A segment is a pair $(s,\mathbf{I})$ of a text string $s$ and an ordered tuple of images $\mathbf{I}=(I_1,\dots,I_m)$, $m\ge 0$, each referenced once in $s$ by a placeholder. A visual-candidate request is one whose state and candidates are segments.
\end{definition}

A screenshot crop is a segment with one image and a short label; a text candidate is the case $m=0$. Following \jeva{}, we ask of a decision model: (R1) an explicit probability $p(\bot)$; (R2) outputs invariant to the order of $C$; (R3) cost linear in $K$ with no fixed upper bound; (R4) candidate computation that can be cached and reused across requests.

\paragraph{The axis.} Let $u_i\in\mathbb{R}^d$ denote the vector the decision module reads for candidate $c_i$. Four designs differ in the arguments of $u_i$ (\cref{fig:designs}, \cref{tab:designs}). In \emph{joint} encoding the candidates are listed in one sequence after the state and the answer is read from letter logits; every candidate's representation depends on the state, on the candidates listed before it and on its position, and the answer reads all of them. In \emph{independent} encoding $u_i=\enc(c_i,t)$. In \emph{state-conditioned} encoding $u_i=\enc(x,\iota,t,c_i)$: the candidate sees the request but no other candidate. Finally, a permutation-equivariant readout over $\{u_i\}$ can let pooled candidate vectors see each other after encoding.

\begin{table}[t]
\centering\small
\setlength{\tabcolsep}{3pt}
\caption{Designs against R2 (order), R4 (reuse) and R1 (null), and candidate encodings per request; $m$ orders averaged, $r$ fraction cached. ``Letter'': a ``none'' option in the letter slots, without a coverage gate.}
\label{tab:designs}
\setlength{\tabcolsep}{3pt}
\begin{tabular}{lcccc}
\toprule
Design & R2 & R4 & R1 & Encodings \\
\midrule
Joint (letter slots) & no & no & letter & $K$ jointly \\
Joint, order-averaged & approx. & no & letter & $mK$ jointly \\
Independent & exact & across requests & yes & $(1-r)K$ \\
State-conditioned & exact & within request & yes & $1+K$ \\
\quad + readout & exact & within request & yes & $1+K$ \\
\bottomrule
\end{tabular}
\end{table}

\section{Method}
\label{sec:method}

\iris{} keeps the full Qwen3.5-4B vision--language model~\citep{qwen35}, which \jeva{} loads and reduces to its language model. A segment is rendered into one token sequence in which each image placeholder becomes the backbone's vision block; the language model reads the mixed sequence, and the last token is pooled and projected to $d=768$. The vision tower is frozen; the language model carries a rank-64 LoRA~\citep{lora}. The decision module and the training objective are \jeva{}'s and are not modified.

\subsection{Decision module}
\label{sec:decision}
Each of $N$ decision blocks lets the current state attend over the candidate memory $\{u_i\}$ (keys $\ell_2$-normalised), adds the message to the state through a learned gate, and applies a feed-forward layer. Only the state is updated; no $K\times K$ interaction is formed. The scores $s_i=\langle z_N,\hat u_i\rangle$ give $q=\mathrm{softmax}(s/\tau_{\text{dec}})$, and a coverage gate $\rho=\sigma\big((\max_i s_i-b)/\tau_{\text{cov}}\big)$ gives $p(\bot)=1-\rho$ and $p_i=\rho\,q_i$. We use $N=1$ with 8 heads. Training minimises the joint cross-entropy over $\{\bot\}\cup C$.

\subsection{Invariance when candidates do not see each other}
\label{sec:guarantees}

\begin{proposition}[Invariance and reuse]
\label{prop:inv}
Let $u_i=\enc(x,\iota,t,c_i)$ for any function $\enc$ that does not take the other candidates as input, and let $\dec$ be the decision stack above with fixed weights, evaluated exactly. (a) For any permutation $\pi$, the state trajectory, the coverage $\rho$ and the \textup{\textsc{noul}} and \textup{\textsc{score}} outputs are unchanged under $C\mapsto\pi C$, and $q$ is permuted accordingly. (b) If in addition $\enc$ does not depend on $(x,\iota)$, replacing any $u_i$ and its per-layer key and value projections by entries computed in an earlier request from the same segment leaves every output unchanged.
\end{proposition}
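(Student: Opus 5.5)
The plan is to prove (a) by showing that the candidate memory transforms equivariantly under a permutation and that every operation of $\dec$ is either symmetric or equivariant in the candidate index. Then (b) follows from a purely functional observation about what each $u_i$ depends on.

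For (a), fix $\pi$ and write $C'=\pi C$, so that $c'_i=c_{\pi^{-1}(i)}$. Since $\enc$ takes only $(x,\iota,t,c_i)$ and no other candidate, we get $u'_i=\enc(x,\iota,t,c'_i)=u_{\pi^{-1}(i)}$. The memory is therefore the same multiset, reindexed. The normalised keys $\hat u_i$ and the per-layer key and value projections are applied row-wise with shared weights, so they are reindexed in the same way. Next we induct on the block index $n=0,\dots,N$. The initial state $z_0$ is computed from the request alone and does not depend on $C$ at all; I would note this explicitly. Suppose $z_n$ is unchanged. Then the attention logits $\langle W_Q z_n, k_i\rangle$ are permuted by $\pi$. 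The softmax over indices is equivariant, and the message $\sum_i \alpha_i v_i$ is a sum over a permuted index set, so it is unchanged. The learned gate and the feed-forward layer act on $z_n$ and the message only, so $z_{n+1}$ is unchanged. This fixes the whole state trajectory.

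It then follows that the scores $s_i=\langle z_N,\hat u_i\rangle$ are permuted. Hence $q=\mathrm{softmax}(s/\tau_{\text{dec}})$ is permuted accordingly. The quantity $\max_i s_i$ is symmetric, so $\rho$ and $p(\bot)$ are unchanged, and $p_i=\rho q_i$ is permuted. The \textsc{noul} and \textsc{score} heads read $z_N$ (possibly together with symmetric pooled quantities), so they are unchanged. The phrase ``evaluated exactly'' matters here: floating-point summation order could break exact equality, and the hypothesis excludes that. I would add this as a remark.

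For (b), if $\enc$ depends only on $(t,c_i)$, then $u_i$ is a deterministic function of the segment $c_i$ and the type $t$ under fixed weights. An entry computed in an earlier request from the same segment and type is therefore literally equal to $u_i$. The same holds for its key and value projections, since these are functions of $u_i$ and fixed weights. Substituting equal values leaves every downstream quantity identical.

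The main obstacle is stating precisely which parts of $\dec$ touch the candidates, since the paper gives the module only informally. I would first fix a formal description: $z_0$ depends only on the request, each block uses candidates only through attention sums over $i$, and the readout uses $\max$ and softmax. I would then check each piece, including any \textsc{score} or \textsc{noul} head, for symmetry in $i$. In (b) I would also make explicit that the cached entry must match on $t$ as well as on the segment, because $\enc(c_i,t)$ depends on $t$.
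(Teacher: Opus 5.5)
Your proof is correct and follows the same route as the paper's proof. Permuting $C$ permutes the memory $\{u_i\}$ because $\enc$ never sees the other candidates, and every candidate-touching operation of $\dec$ (the attention softmax and weighted sum, $\max_i s_i$, the final softmax) is symmetric or equivariant in the index; part (b) is then just determinism of $u_i$ and its projections, and your induction over blocks, the explicit note that $z_0$ does not depend on $C$, and the remark that a cached entry must also match on $t$ are refinements the paper's shorter proof leaves implicit.
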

\begin{proof}
(a) Given $(x,\iota,t)$, each $u_i$ is a function of $c_i$ alone, so permuting $C$ permutes $\{u_i\}$. Each block computes a softmax over $i$ and a weighted sum of values, both invariant to relabelling; the maximum in the coverage gate and the softmax over $s_i$ are likewise invariant, and $q$ follows the relabelling. (b) $u_i$ and its projections are deterministic functions of $c_i$; an entry computed earlier from an identical segment is identical.
\end{proof}

Part (b) is \jeva{}'s argument; the independent encoder satisfies both parts. Part (a) is what makes the state-conditioned encoder admissible: it gives up cross-request reuse and keeps exact order invariance, the unbounded candidate count, and the coverage gate. In bf16 the outputs under reordering agree to within $10^{-7}$; the only order-dependent argmax we observed is a tie between candidates with identical pixels and text.

\paragraph{Cost of reuse.} Let $a$ be the cost of encoding the state, $c$ the cost of encoding one candidate, $\delta\ll c$ the per-candidate cost of the decision read, and $r$ the fraction of candidates found in the cache.

\begin{proposition}[Latency under reuse]
\label{prop:cost}
For the independent encoder, per-request cost is $T(r)=a+(1-r)Kc+K\delta$, and the speedup at full reuse is
\[
S(1)=\frac{a+Kc+K\delta}{a+K\delta}=1+\frac{Kc}{a+K\delta}.
\]
\end{proposition}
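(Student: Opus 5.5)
The argument is an accounting of the work done per request, under the cost model just introduced. We take the costs as additive over independent computations. There are three of them: one state pass of cost $a$, one encoding of cost $c$ for each candidate that must be recomputed, and a decision read of cost $\delta$ per candidate.

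\textbf{State and decision read.} For the independent encoder, $u_i=\enc(c_i,t)$ does not depend on the request. The state $(x,\iota)$ is still new in every request, so it must be encoded at cost $a$. The decision stack of \cref{sec:decision} then attends over all $K$ memory entries, whether they were cached or not. By construction this costs $\delta$ per candidate, because no $K\times K$ interaction is ever formed. This gives the terms $a+K\delta$.

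\textbf{Candidate encodings.} The fraction $r$ of candidates found in the cache is served without re-encoding. By \cref{prop:inv}(b), substituting the cached $u_i$ and its per-layer key and value projections leaves every output unchanged, so the cached entries cost nothing beyond the read already counted. The remaining $(1-r)K$ candidates are encoded independently at cost $c$ each, contributing $(1-r)Kc$. Summing gives $T(r)=a+(1-r)Kc+K\delta$.

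\textbf{Speedup.} Define the speedup at full reuse as $S(1)=T(0)/T(1)$. We have $T(0)=a+Kc+K\delta$ and $T(1)=a+K\delta$. Writing $T(0)=T(1)+Kc$ and dividing by $T(1)$ gives the second form, $1+Kc/(a+K\delta)$.

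The only real obstacle is justifying that these costs add with no hidden terms. Two points need care. First, cache lookup must be folded into $\delta$, or treated as negligible. Second, we rely on the modelling assumption that batching has no cross-term in $K$. The additivity over $K$ is itself structural: it follows from the decision module having no pairwise candidate interaction. Correctness of reuse is exactly \cref{prop:inv}(b), so no further argument is needed there.
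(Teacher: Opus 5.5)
Your proposal is correct and takes the paper's approach: the paper gives no separate proof and treats the proposition as the same additive accounting of one state pass $a$, $(1-r)K$ fresh encodings at $c$ and $K$ decision reads at $\delta$, with $S(1)=T(0)/T(1)$. The hidden terms you flag are the ones the paper's latency measurements show the model omits: cache assembly at $r=1$ and per-request overhead at $K=16$.
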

The limit grows with $c$, so reuse is worth more for image candidates than for text candidates, by the ratio of their encoding costs. A cache entry is keyed by the rendered text and a digest of each image's bytes, so identical candidates share an entry regardless of where they came from.

\subsection{State-conditioned encoding with a forked cache}
\label{sec:cond-method}
Encoding each candidate after the state naively costs $K$ passes over the state. We encode the state once with its key/value caches retained (Qwen3.5 mixes full attention with gated linear attention, and both kinds of layer keep a cache), and continue each candidate from a fork of that cache. The fork is an index selection, so during training the gradients of all $K$ candidates flow back into the single state pass. The state is left-padded so that every row's cache ends at the fork point, and positions continue from the state's last position; the result equals the concatenated sequence $[\text{state};\text{candidate}_i]$ up to bf16 rounding, which we verified against a concatenation implementation. Per request the cost is one state pass plus $K$ candidate continuations, and the state's cache is shared by every question asked about the same state.

\subsection{Training}
\label{sec:training}
Both encoders start from \jeva{}-4B's weights, remapped to the vision--language module layout; on text inputs the remapped model reproduces \jeva{}'s outputs exactly (maximum absolute difference 0). Stage 1 trains on OneJev-Data~\citep{onejev2026}, the released 94{,}707 of the 99{,}193 questions on which \onejev{}-4B was trained, in which images appear in the state and candidates are text (we use the checkpoint after 80k rows; \cref{app:zeroshot} reports it and the untrained transfer). Stage 2 trains on visual-candidate steps for two epochs. For the state-conditioned model we add a short abstention stage: one epoch at a fifth of the learning rate on a subset in which every null step appears three times and half of the steps with a positive are kept (24.7\% null). Hyper-parameters are in \cref{app:hparams}.

\section{Visual-Candidate Benchmarks}
\label{sec:bench}

\paragraph{Mind2Web crops.} Multimodal-Mind2Web~\citep{mind2web,seeact} records tasks on real websites; at each step the agent selects one element, and the multimodal release adds the page screenshot and every element's bounding box. We keep \jeva{}'s three test splits (cross-task, cross-website, cross-domain), its $K=16$ and its positives, and replace each candidate by the element's crop (6\,px padding, minimum side 48\,px, longest side 336\,px) with a short label of the tag and salient attributes (\cref{fig:example}).

Two details make order invariance measurable. Pages often contain elements with identical appearance and markup, whose argmax is decided by position; we deduplicate candidates by text and image bytes, keeping the positive. Our construction places the positive first, so we shuffle each record's candidates with a seed derived from its id; otherwise a joint encoder could learn that the answer is the first slot. The result has 7{,}751 / 1{,}337 / 1{,}017 / 4{,}042 steps for train / cross-task / cross-website / cross-domain. Steps whose gold element is absent from the cleaned HTML (4.3--6.1\% per split) are kept as null targets. These accuracies are not comparable with GUI-agent element accuracies on standard Mind2Web, which rank a page's released candidate elements (about 550 per cross-task step), usually from HTML; here a step has at most 16 crops and null steps count.

\paragraph{AndroidControl crops.} AndroidControl~\citep{androidcontrol} records demonstrations on Android apps with an accessibility tree and a screenshot per step, and defines test sub-splits for in-distribution episodes and for unseen apps, tasks and app categories. We convert every click and long-press step into a \textsc{choice}: the state is the episode goal and the instructions of the previous steps, and the candidates are crops of the screen's visible, enabled, clickable elements, clipped to the screen and subject to size limits, with the same crop and label rules as above. The gold candidate is the smallest clickable element that contains the touch point (rules for targets not marked clickable are in \cref{app:data}); steps without a gold (1.2\% of training and 1.5\% of test steps) are kept as null targets. Deduplication, shuffling and $K=16$ follow the Mind2Web construction. The build has 46{,}079 training and 4{,}990 test steps (2{,}421 in-distribution, 1{,}965 unseen-app, 2{,}523 unseen-task and 2{,}183 unseen-category; the sub-splits overlap).

\section{Experiments}
\label{sec:exp}

\paragraph{Setup.} All models use Qwen3.5-4B. The joint baseline is \onejev{}-4B~\citep{onejev2026}, which shares the backbone and, up to the 4{,}486 questions whose sources do not permit redistribution, the stage-1 data. We use the first release, the only one with public training data. Because it cannot take image candidates, we list every candidate's crop inside its state and keep letter slots for the candidates. We fine-tune it with LoRA on the same 7{,}751 Mind2Web steps in that format. The two sides are not tuned identically: \onejev{} starts from its own release, uses LoRA rank 32, reads crops with 25\% more image tokens (upscaled; this does not help \iris{}, \cref{app:gap}) and is reported at its final weights, whereas \iris{} starts from \jeva{}-4B plus stage 1, uses rank 64, and its released model adds checkpoint selection, the abstention stage and a margin, all chosen on cross-task (\cref{app:hparams}).

Each record is evaluated in its original order, reversed, and three seeded random orders. The \emph{flip rate} is the fraction of records whose argmax differs across the five orders; a change within a set of candidates whose probabilities agree to $10^{-6}$ is a numerical tie and is not counted. Accuracy is first-choice accuracy in the original order and counts null steps. Caches are cleared before every request unless reuse is measured.

\subsection{What seeing the request buys}
\label{sec:main}

\begin{table}[t]
\centering\small
\setlength{\tabcolsep}{5pt}
\caption{Element selection on Mind2Web crops ($K{=}16$): first-choice accuracy and flip rate over five candidate orders. All rows except \jeva{} use Qwen3.5-4B and the same crops; the two \iris{} stage-2 rows differ only in what a candidate sees (controlled), the comparison with \onejev{} is between systems; all but the first three are fine-tuned on the Mind2Web crop steps, the abstention rows then on null-oversampled steps. $^\dagger$Selection split: the checkpoint and the abstention margin of the released model were chosen on cross-task. Zero-shot: letter logits of the unmodified chat model (\cref{app:zeroshotchat}). The averaging row averages the joint encoder over the five orders (five passes per request); its flip rate is not measured. The set-attention rows continue training from the released conditioned model and from the mixture-trained independent model (\cref{app:mixture}), so they are compared with those in \cref{sec:gap}; the conditioned one has the same cross-website accuracy as the released model at $\delta=0$ (86.53\%), although 110 of its 1{,}017 decisions differ.}
\label{tab:main}
\begin{tabular}{ll ccc ccc}
\toprule
& & \multicolumn{3}{c}{Accuracy (\%)} & \multicolumn{3}{c}{Flip rate (\%)} \\
\cmidrule(lr){3-5}\cmidrule(lr){6-8}
Model & Candidate sees & Task & Website & Domain & Task & Website & Domain \\
\midrule
\jeva{}-4B, HTML text (reported) & itself & 79.99 & 62.25 & 66.23 & 0.00 & 0.00 & 0.00 \\
\onejev{}-4B, released & everything & 37.85 & 37.46 & 34.83 & 50.64 & 52.80 & 54.30 \\
Qwen3.5-4B chat, zero-shot letters & everything & 26.40 & 24.68 & 29.22 & 78.38 & 79.45 & 78.13 \\
\onejev{}-4B, fine-tuned (joint) & everything & 90.58 & \textbf{89.48} & 91.59 & 7.63 & 10.13 & 7.99 \\
\quad + averaging over 5 orders & everything & 91.77 & 89.28 & \textbf{92.11} & -- & -- & -- \\
\midrule
\iris{} independent & itself & 78.38 & 72.07 & 74.84 & 0.00 & 0.00 & 0.00 \\
\iris{} state-conditioned & the request & 89.60 & 84.46 & 87.48 & 0.00 & 0.00 & 0.00 \\
\quad + abstention stage and margin & the request & \textbf{92.07}$^\dagger$ & 87.02 & 90.20 & 0.00 & 0.00 & 0.00 \\
\quad + abstention stage, 3 seeds ($\delta=0$) & the request & 91.90$^\dagger$ & 87.45 & 90.42 & 0.00 & 0.00 & 0.00 \\
\quad \quad sd over seeds & & 0.28 & 0.79 & 0.56 & & & \\
\midrule
\iris{} independent + set attention & itself, then set & 80.55 & 73.55 & 76.92 & 0.00 & 0.00 & 0.00 \\
\iris{} conditioned + set attention & request, then set & 89.75 & 86.53 & 89.91 & 0.00 & 0.00 & 0.00 \\
\bottomrule
\end{tabular}
\end{table}

\Cref{tab:main} gives the main comparison. The released \onejev{} can read the crops but not stably: it changes its answer on 51--54\% of records when the same crops are listed in a different order. Fine-tuning on the 7{,}751 crop steps raises its accuracy to 90.58 / 89.48 / 91.59\% and lowers its flip rate to 7.6--10.1\%, but does not remove it; its all-order accuracy, which credits a record only if all five orders are answered correctly, is 84.4\% on cross-website and 87.8\% on cross-domain. Averaging its distribution over the five orders reduces the order dependence without removing it (\cref{sec:reuse}) and reaches 91.77 / 89.28 / 92.11\%, at the cost of five forward passes per request, none of whose candidate computation can be reused; its abstention on null steps falls to 56--68\%.

Independent encoding, trained on the same steps from the same stage-1 checkpoint, reaches 78.38 / 72.07 / 74.84\% with a flip rate of exactly zero on all 6{,}396 test records. It is 9.8 and 8.6 points above \jeva{}'s HTML-text candidates on cross-website and cross-domain (1.6 below on cross-task), a reference point rather than a controlled comparison.

The state-conditioned encoder is the controlled test of what independent encoding loses: it adds the request to what a candidate sees and withholds the other candidates. Trained with the same steps and epochs and reported at its final checkpoint, it reaches 89.60 / 84.46 / 87.48\%, which is 11.2 / 12.4 / 12.6 points above independent encoding, with a flip rate of zero. Most of the gap to joint encoding is therefore the candidate not seeing the request. It relies mainly on the crops: retrained without labels it ranks 85--87\% of cross-website and cross-domain positives correctly, without crops 53--56\% (\cref{app:modality}).

After the abstention stage and a margin $\delta=0.24$ on the null decision chosen on cross-task (\cref{app:margin}), the model reaches 92.07 / 87.02 / 90.20\%: 2.5 points below the fine-tuned joint encoder on cross-website and 1.4 on cross-domain, and above its all-order accuracy on both. At $\delta=0$ the recipe reaches 87.45 $\pm$ 0.79 / 90.42 $\pm$ 0.56 over three seeds; the released model's 86.53\% is its low end (\cref{tab:main}).

At a matched operating point, with every model's threshold set to the same quantile of its own cross-task scores (the same abstention budget, 9.4\% of cross-task requests; \cref{app:gate}), the state-conditioned recipe (mean of three seeds) is 0.95 and 0.87 points below the joint encoder (one run; \cref{tab:matched}), with task-level 95\% intervals of $[-3.01,+1.14]$ and $[-1.77,-0.02]$ (\cref{app:cluster}). Against four more joint-encoder runs it is 0.61 and 1.00 points below ($[-2.29,+1.15]$, $[-1.73,-0.30]$). Without checkpoint selection and the abstention stage, which the joint encoder does not get, its stage-2 final weights are 3.2 and 2.9 points below; skipping only the selection costs about half a point (\cref{app:gap}).

\begin{table}[t]
\centering\small
\setlength{\tabcolsep}{2.5pt}
\caption{Mind2Web crops at a matched abstention operating point: each model abstains when $\operatorname{logit} p(\bot)$ exceeds the 0.906-quantile of its own cross-task requests (\cref{app:gate}). Accuracy on all steps / steps with a positive / abstention on null steps (\%). The state-conditioned rows average three seeds; the first is the released recipe (checkpoint selection and abstention stage), the second its stage-2 final weights. The joint encoder is one run at its final weights; the row below it averages four more runs with the same recipe (seeds 7, 13, 21, 42). Gaps in the text use unrounded values (cross-domain $90.203-89.337=0.87$).}
\label{tab:matched}
\setlength{\tabcolsep}{3pt}
\begin{tabular}{lcc}
\toprule
Model & Website & Domain \\
\midrule
\onejev{} fine-tuned & 86.04\,/\,86.13\,/\,84.1 & 90.20\,/\,90.79\,/\,80.2 \\
\quad four more runs & 85.69\,/\,85.79\,/\,83.5 & 90.34\,/\,90.83\,/\,81.9 \\
\iris{} conditioned & 85.09\,/\,85.27\,/\,81.0 & 89.34\,/\,89.79\,/\,81.5 \\
\quad stage-2 final & 82.86\,/\,83.04\,/\,78.8 & 87.30\,/\,87.94\,/\,76.3 \\
\iris{} independent & 70.89\,/\,72.25\,/\,40.9 & 71.67\,/\,73.72\,/\,36.5 \\
\bottomrule
\end{tabular}
\end{table}

\paragraph{AndroidControl.} \Cref{tab:ac} lists the same comparison on the AndroidControl sub-splits. The state-conditioned model is trained for one epoch on the 46{,}079 training steps from the same stage-1 checkpoint and reported at its final weights (three seeds, against five runs of \onejev{}); it reaches 79.22 / 75.42 / 76.32 / 75.49\% with a flip rate of zero. Natural null steps are rare (1.5\% of test steps), so null detection uses test copies with the gold removed from a seeded 10\% of steps (AUROC 0.82--0.83). An abstention stage like Mind2Web's raises abstention on them from 11--12\% to 64--69\% but costs 1.7--6.7 points on the unmodified splits (seed 42), so the table omits it.

As on Mind2Web, the state-conditioned model is below the fine-tuned \onejev{} in the original order, by 2.8 points in distribution and 3.9--4.3 on the unseen sub-splits (accuracy intervals exclude zero; \cref{app:acmatched}), by 2.6--4.9 at a matched operating point, with a null AUROC 0.9--2.2 points lower; it is 1.7--2.1 points above \onejev{}'s all-order accuracy, since \onejev{} changes its answer on 13--16\% of steps. Seeing the request recovers about half of the gap between independent and joint encoding here (3.3--5.8 of 6.1--10.1 points, against 71--92\% on Mind2Web), and on the unseen sub-splits the conditioned model varies more across seeds (sd 0.9--1.5 points, \onejev{} 0.1--0.2).

\begin{table}[t]
\centering\small
\setlength{\tabcolsep}{3pt}
\caption{Element selection on AndroidControl crops: accuracy (\%) on the test sub-splits (IDD: in-distribution; unseen app, task and category). All models are reported at their final weights: \onejev{} is the mean of five runs (sd 0.08--0.18), \iris{} state-conditioned of three seeds (sd 0.23 in distribution, 0.91--1.51 unseen), independent one run. \iris{}'s flip rate is zero on every sub-split.}
\label{tab:ac}
\begin{tabular}{lcccc}
\toprule
Model & IDD & App & Task & Category \\
\midrule
\onejev{}-4B fine-tuned (joint) & 82.06 & 79.31 & 80.31 & 79.81 \\
\quad all five orders correct & 77.15 & 73.56 & 74.59 & 73.78 \\
\quad averaged over 5 orders & 83.23 & 80.11 & 81.15 & 80.73 \\
\iris{} independent & 75.96 & 70.64 & 71.22 & 69.72 \\
\iris{} state-conditioned & 79.22 & 75.42 & 76.32 & 75.49 \\
\bottomrule
\end{tabular}
\end{table}

\subsection{What letting candidates see each other buys}
\label{sec:gap}

If the joint encoder's lead came from comparing candidates while they are encoded, isolating them should cost it that lead, and letting ours interact equivariantly, which keeps the decision order-invariant, should recover it; neither happens. Retraining \onejev{} with its candidates isolated in every layer (one run) costs it 1.13 / 0.39 of its 3.4 / 2.2-point ranking lead over stage-2 \iris{} (three seeds), or 0.72 / 0.28 against the mean of five joint runs (website / domain; \cref{app:isolation}); its answer token still reads every candidate, so comparison at the readout is not removed. Refitting \iris{}'s head on the frozen encoder recovers 1.2 / 0.7 more, mostly on one seed (\cref{app:gap}).

On our side, we tried readouts over the pooled vectors (a set-attention block~\citep{settransformer}, a token-level read of each candidate's token states, a multi-vector memory) and an interaction inside the encoder that starts each candidate's Gated DeltaNet state~\citep{deltanet} from the other candidates' writes, weighted as an average over their orders~\citep{picaso}. Each continues training a released model for one epoch, so we compare it with one more epoch without any module, on ranking accuracy. None moves beyond that control: set attention changes accuracy by $-0.5$ / $+0.6$ / $+0.9$ points against $-1.4$ / $+2.0$ / $+0.2$ for the control, and the in-encoder interaction by $-0.22$ after head refit ($[-0.73,+0.30]$; \cref{app:encinter}). The remaining errors sit on the quarter of steps whose gold and hardest negative vectors nearly coincide (ranking about 70\% there, 99\% elsewhere), although their crops differ visibly; no interaction we tried, and no higher image resolution, helps there (\cref{app:gap}). These negative results cannot exclude an interaction that other training would find (the starting model ranks 97.4\% of training steps correctly).

\paragraph{Geometry of the candidate vectors.}\label{app:geometry} The set-attention block learned almost nothing: its output projections stayed near zero, it changed the candidate vectors by 0.3\% in norm, and the normalised entropy of its attention rows was 0.975, where 1 is uniform.
The candidate vectors are close to each other. On cross-website, the cosine between the gold candidate's vector and its hardest negative averages 0.984 on steps the model answers correctly and 0.998 on steps it answers wrongly, and on every wrong step the chosen candidate is within cosine 0.986 of the gold. The crops are not alike: on wrong records the median pixel RMSE between the gold and the chosen crop is 0.165, and of 65 errors 4 are pixel twins and 20 share the label text. The distinguishing information is also not recoverable from the token states by similarity: an untrained MaxSim score \citep{colbert} between the state's and each candidate's token states prefers the gold over the chosen candidate on 5 of 19 wrong records.

In the multi-vector memory, 86--99\% of the energy of the extra vectors lies in a direction shared by all candidates. The candidate vectors of one request thus occupy a narrow cone, consistent with the anisotropy reported for contextual representations~\citep{ethayarajh2019}, and the decision module separates them through its temperature. Whether pooling has discarded what a readout would need is not settled by this statistic: since the vectors share one dominant direction, their raw cosines are close to one whatever the orthogonal components carry, and we do not test scoring candidates without pooling.

\subsection{Null detection}
\label{sec:null}
Null steps (81 / 44 / 222 per split) test R1. \Cref{tab:null} (\cref{app:tradeoff}) gives the AUROC of $p(\bot)$ for separating them from steps with a positive, and the abstention rate, the share of null steps on which $\bot$ is the answer. The state-conditioned model ranks null steps well after stage 2 (AUROC 0.92--0.95) but abstains on only 34--43\% of them; the abstention stage raises abstention to 77--79\% and the AUROC to 0.93--0.97. At a matched operating point and over three seeds, the stage adds 1.6 and 0.9 points, but one more epoch at the same learning rate without null oversampling adds 1.1 and 0.5 (\cref{app:abststage}): most of the gain is the extra epoch. The independent model, whose candidates never see the instruction, is weaker on both.

\paragraph{The gate's threshold drifts.} Across the stage-2 checkpoints of one run (seed 7; \cref{tab:gate-symptom}), abstention on the cross-task null steps swings between 27 and 94\% while ranking accuracy stays at 90--93\%: once positive steps saturate their coverage, mainly the few null steps constrain the score level, and the gate compares $\max_i s_i$ with an almost fixed $b$ (\cref{app:gate}). Abstaining instead when $\operatorname{logit} p(\bot)$ exceeds the $q$-quantile of unlabelled requests ($q$ chosen once on cross-task) aligns operating points without changing AUROC or calibrating probabilities, and cuts the spread of abstention across 25 checkpoints from 16.5 to 4.9 points.

\subsection{Scaling and reusing image candidates}
\label{sec:reuse}

We measure R3 and R4 with the mixture-trained independent model (\cref{app:mixture}) on one A800 GPU, on cross-task steps whose candidate pools are grown to $K\in\{16,64,256,1024,2048\}$ crops (the gold is always present; we measure cost, not accuracy, at larger $K$). The state costs $a=52.8$\,ms; one crop (69--75 tokens) costs 8.7--11.6\,ms and its text label 3.6\,ms, a ratio of 2.4--3.2. With caches cleared, latency grows linearly with $K$, from 208\,ms at $K=16$ to 23.9\,s at $K=2{,}048$, and candidates are encoded in chunks, so peak activation memory does not grow with $K$.

\Cref{tab:reuse} compares measured median latency with $T(r)$ when exactly $\lfloor rK\rceil$ of the crops are already cached. For $K\ge64$, measured latency is within 6\% of $T(r)$ at $r\in\{0.25,0.5,0.75\}$. The model departs at $K=16$, where per-request overhead it omits matters (+16 to +50\%), and at $r=1$, where assembling cached entries adds 3 to 71\,ms. The speedup at full reuse grows with $K$ because $a$ is paid once: $3.6\times$ at $K=16$, $37\times$ at 256 and $191\times$ at 2{,}048.

\begin{table}[t]
\centering\small
\setlength{\tabcolsep}{3pt}
\caption{Median latency (ms), measured / predicted by \cref{prop:cost}, with $K$ image candidates of which a fraction $r$ is cached; fresh state per request, one A800, 5 repetitions. The constants are measured per $K$ on the same crops, so $r{=}0$ is not a test of the model; $S(1)$ is measured.}
\label{tab:reuse}
\begin{tabular}{rrrrr}
\toprule
$K$ & $r{=}0$ & $r{=}0.5$ & $r{=}1$ & $S(1)$ \\
\midrule
16 & 208 / 201 & 147 / 127 & 57 / 54 & 3.6$\times$ \\
64 & 652 / 628 & 339 / 341 & 58 / 54 & 11$\times$ \\
256 & 2{,}342 / 2{,}275 & 1{,}208 / 1{,}164 & 64 / 54 & 37$\times$ \\
1{,}024 & 12{,}108 / 11{,}907 & 6{,}083 / 5{,}980 & 91 / 54 & 133$\times$ \\
2{,}048 & 23{,}894 / 23{,}894 & 11{,}909 / 11{,}974 & 125 / 54 & 191$\times$ \\
\bottomrule
\end{tabular}
\end{table}

On the cross-task split with full-page candidate pools, 45\% of the 658{,}639 candidates an agent is shown are crops (identical image bytes and label) it has already been shown in the same task, and the median step after a task's first shares 61\% of its candidates with earlier steps of that task. The state-conditioned encoder cannot exploit this cross-request reuse; it shares only the state pass within a request.

\begin{figure}[t]
\centering
\includegraphics[width=3.25in]{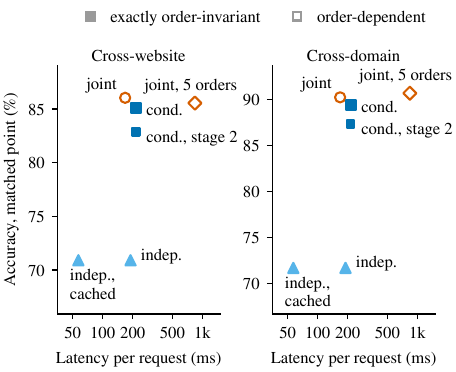}
\caption{The conditioned encoder is the most accurate exactly order-invariant design (filled), at a quarter of the latency of the averaged joint encoder. Accuracy at the matched operating point (\cref{tab:matched}) against median latency per request on one idle A800 (280 cross-website requests, $K\le16$; the joint encoder in its faster setting). The independent encoder's cached point uses \cref{tab:reuse} ($K{=}16$, $r{=}1$).}
\label{fig:pareto}
\end{figure}

\Cref{fig:pareto} places the designs on accuracy, order invariance and latency. With identical inputs and only the model call timed, one joint request takes 168\,ms, the conditioned model 215\,ms and the independent model 190\,ms, or 57\,ms when its candidates are cached. The joint encoder is the fastest design and, with or without averaging, the most accurate, but it is order-dependent. Averaging five orders costs five requests (840\,ms), and two disjoint sets of five orders still disagree on 3.2\% of cross-website steps (two single orders: 6.0\%). The conditioned encoder gets exact invariance for one request at 1.3 times the joint latency, 0.5 and 1.3 points below the averaged joint encoder. The choice between \iris{}'s two encoders is set by the workload: when candidates exist before the request and recur, encode them once; when each request brings its own candidates, let them see the request.

\section{Related Work}
\label{sec:related}

\paragraph{Decision models and order dependence.} Language models change their multiple-choice answers under option reordering~\citep{pezeshkpour2024}, as do vision--language models~\citep{atabuzzaman2025mcqa}, and debiasing at inference time estimates and removes a prior over option identifiers or averages over permutations~\citep{pride}. Set-LLM changes the attention mask and positional encoding of a decoder so that a set of inputs is processed without order~\citep{setllm}, and Deep Sets~\citep{deepsets} and Set Transformer~\citep{settransformer} are permutation-invariant and equivariant set architectures. Among System One models, Runoff makes the decision order-invariant at inference time through a consideration set compared in every position~\citep{runoff}, and \jeva{} derives an architecture in which null awareness, order invariance, scalability and caching follow from encoding candidates independently~\citep{jeva2026}. MetaEncoder turns a multimodal decoder into a bi-encoder~\citep{metaencoder2026}. Scoring each option separately is order-independent~\citep{robinson2023}. Our work keeps \jeva{}'s decision module and asks what the candidate encoder may see when candidates are images.

\paragraph{Multimodal decisions over candidates.} \onejev{} answers typed questions about images and video with text candidates~\citep{onejev2026}. \citet{rerank2026} rerank the top images of a retriever by listing them in letter slots of a large vision--language model, and listwise rerankers of image--text documents likewise encode the candidates jointly~\citep{ranknexus}. Universal multimodal retrievers~\citep{uniir,vlm2vec2} embed queries and candidates separately and score them by similarity, which caches candidates but provides no null output and no interaction among candidates in the decision.

\paragraph{Encoders and prefix sharing.} Cross-encoders read query and candidate together and bi-encoders cache candidates~\citep{sbert}, as dense retrieval precomputes candidate vectors~\citep{dpr}; poly-encoders let cached candidate vectors attend to learned query codes~\citep{polyencoder}, and late interaction keeps per-token vectors and scores with MaxSim~\citep{colbert}. The state-conditioned encoder is a per-candidate cross-encoder whose request prefix is computed once, as in shared-prefix serving~\citep{sglang,hydragen}, combined with \jeva{}'s set decision module; our contribution is not this combination but the controlled decomposition of where joint encoding's advantage comes from. RetiJev copies such a cache to independent question branches on the same hybrid backbone but encodes a question's candidates jointly~\citep{retijev}; we fork per candidate. Set-Encoder lets passages exchange summary tokens at shared positions~\citep{setencoder}; PICASO averages state-space states over all context orders~\citep{picaso}; for LLM judges, missing cross-candidate interaction perturbs selections~\citep{liang2026kv}.

\paragraph{GUI agents.} Web and mobile agents select elements by ranking DOM candidates~\citep{mind2web}, by grounding a model's description on the page~\citep{seeact}, or by predicting locations directly from screenshots~\citep{seeclick,cogagent}; RankGround reranks screenshot crops before grounding~\citep{rankground}.

\section{Limitations}
\label{sec:limitations}
The vision tower is frozen (a LoRA on it did not help), and unpooled scoring is not tested. Isolation removes cross-candidate attention while candidates are encoded, not comparison at the readout, and tasks whose answer requires comparing candidates are not studied. The state-conditioned recipe and the joint encoder have three and five runs on both benchmarks; every other configuration is trained once. On AndroidControl, refitting the head recovers only 0.2--0.5 points of the gap left after seeing the request, and the rest is not decomposed. Null steps are few (4--6\% on Mind2Web), so null detection is measured coarsely; the quantile threshold of \cref{sec:null} assumes that the share of null requests in the calibration data matches deployment. The joint baseline is one model family and size. Reuse is timed at synthetic ratios. \iris{} starts from the released \jeva{}-4B checkpoint, whose curriculum data are not public; fine-tuning on visual candidates reduces text-only accuracy (\cref{app:jevbench}), and when candidates must be compared with a query image the independent encoder falls below its first-stage retriever (\cref{app:mmeb}).

\section{Conclusion}
\label{sec:conclusion}
A joint encoder's advantage over independent encoding comes mostly from candidates seeing the request, recovering 11--13 of 12--17 points on Mind2Web with exact order invariance; seeing each other during encoding adds 0.3 to 1.1 points, and abstention thresholds need post-hoc calibration.

\bibliography{references}
\bibliographystyle{plainnat}

\newpage
\appendix

\section{Implementation and Hyper-parameters}
\label{app:hparams}
\begin{table}[ht]\centering\small
\caption{Settings. Everything not listed is \jeva{}-4B's default.}
\label{tab:hparams}
\begin{tabular}{lp{0.74\linewidth}}
\toprule
Backbone & Qwen3.5-4B (vision tower and language model), bf16, SDPA attention \\
Vision tower & frozen; patch embedding computed as a matrix product (\cref{app:conv}) \\
LoRA & rank 64, $\alpha$ 128, dropout 0.05, on the attention and feed-forward projections of the language model; initialised from \jeva{}-4B \\
Pooling / projection & last token; two linear maps $4096\to768$, no bias \\
Decision stack & $N{=}1$ block, 8 heads, SwiGLU ratio 4; $\tau_{\text{set}}$ init 0.15, $\tau_{\text{dec}}$ 0.10, $\tau_{\text{cov}}$ 0.10 \\
Pixel budget & up to $200{,}704$ px per image ($\approx$256 tokens); Mind2Web crops fall below it \\
Optimiser & AdamW, lr $10^{-4}$ for the LoRA stages, $2\times10^{-5}$ for the abstention stage, no weight decay, gradient-norm clip 1.0 \\
Stage 1 & OneJev-Data, checkpoint at 80{,}000 rows of a planned epoch over the 92{,}707-row training split (the other 2{,}000 of the 94{,}707 released questions are held out for validation); 14.6\,h on $4\times$A800-80GB \\
Stage 2, independent & 7{,}751 Mind2Web steps, 2 epochs, 47\,min on $4\times$A800-80GB \\
Stage 2, conditioned & 7{,}751 Mind2Web steps, 2 epochs, one A800-80GB, 32 rows per optimiser step \\
Abstention stage & 5{,}005 rows (the 412 null steps $\times3$ and 3{,}769 of the 7{,}339 steps with a positive, 24.7\% null), 1 epoch, from the conditioned checkpoint with the best cross-task accuracy (epoch 2, step 50) \\
Readouts (\cref{sec:gap}) & set attention: 1 layer, 8 heads, zero-initialised outputs; token-level read: 8 heads, gated; multi-vector memory: 4 slots; each trained 1 epoch (token-level: stopped at 0.8) with new modules at $10$--$20\times$ the learning rate \\
\onejev{} fine-tuned & the 7{,}751 steps in \onejev{}'s crops-in-state format; LoRA $r{=}32$, $\alpha{=}64$ on the language model (vision tower frozen), lr $10^{-4}$, 2 epochs; 1.6\,h on one A800-80GB \\
AndroidControl & \iris{} independent and state-conditioned: 46{,}079 steps, 1 epoch from the stage-1 checkpoint, lr $10^{-4}$, 32 rows per optimiser step, one A800-80GB, final weights; \onejev{}: the Mind2Web recipe above with 1 epoch \\
\bottomrule
\end{tabular}
\end{table}

\section{Benchmark Construction Details}
\label{app:data}
Mind2Web crops are taken from the full-page screenshot with the element's bounding box, padded by 6\,px, grown symmetrically to a minimum side of 48\,px and downscaled to a longest side of 336\,px. The label is the tag followed by the first non-empty of \texttt{aria-label}, \texttt{alt}, \texttt{title}, \texttt{placeholder}, \texttt{name}, \texttt{value}, \texttt{type}, each truncated to 60 characters. Negatives are drawn from the step's negative candidates with a seed derived from the step id, so the build is reproducible and independent of the number of worker processes. Deduplication removed 109 candidates in 103 of 1{,}337 cross-task steps, 89 in 79 of 1{,}017 cross-website steps, 400 in 330 of 4{,}042 cross-domain steps and 714 in 603 of 7{,}751 training steps. Deduplication is applied to the input files before any model is run, and every result in the paper is inference on the deduplicated candidate sets. Dropping a duplicate's probability after inference and renormalising would not be equivalent, because the decision state attends over all candidates. For AndroidControl, the size and enabled filters apply to negatives only. Some targets, such as calendar entries and time-picker digits, are not marked clickable in the accessibility tree; when no clickable element contains the touch point, the smallest visible element with text or a content description that contains it becomes the gold. Touches that land only on the status bar are dropped as annotation errors, and the remaining steps without a gold are kept as null targets; most are taps on unlabelled, non-clickable elements such as search fields rendered inside web views, and some are taps on drawing canvases.

After shuffling, the gold position is at index 0 in 7.3 / 6.9 / 7.0\% of the cross-task / cross-website / cross-domain steps with a positive (6.25\% for a uniform position; records with fewer than 16 candidates raise it slightly).

\section{Additional Results}

\subsection{Accuracy against order dependence and null detection}
\label{app:tradeoff}
\begin{figure}[ht]
\centering
\includegraphics[width=\linewidth]{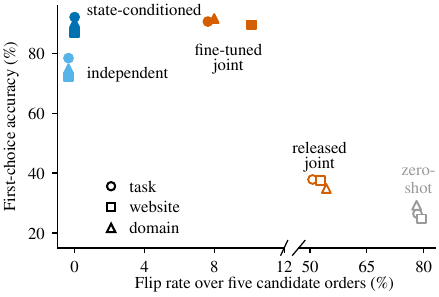}
\caption{Only joint encoding trades order invariance for accuracy: every \iris{} model has a flip rate of zero. One marker per model and split; \iris{} markers are offset horizontally for visibility. State-conditioned accuracies include the abstention margin of \cref{tab:main}; the x-axis is broken between 12 and 47\%. Grey: the unmodified Qwen3.5-4B chat model, zero-shot.}
\label{fig:tradeoff}
\end{figure}

\subsection{Zero-shot chat model}
\label{app:zeroshotchat}
The zero-shot row of \cref{tab:main} uses the released Qwen3.5-4B chat weights without any training. Each request lists the state, the question and the candidates as ``A: <crop> <tag>'', the crops as for \onejev{} (at most 448\,px on the long side), followed by one more letter for ``none of the above''. The model is run in non-thinking mode with the instruction to answer with a letter, and its scores are the logits of the first answer token restricted to the request's letters. Those letters receive 55--63\% of the probability of that token; most of the rest goes to the start of an explanation. Ranking accuracy on steps with a positive is 32.3 / 29.4 / 34.1\%, all-order accuracy 7.9 / 8.0 / 10.2\%, and at the matched operating point of \cref{tab:matched} it reaches 27.34 / 31.27\% on cross-website / cross-domain. It picks the first listed slot in 16--17\% of its answers, where a uniform choice over about 15 candidates would give 7\%, the selection bias reported for vision-language models on multiple-choice questions~\citep{atabuzzaman2025mcqa}.
\begin{table}[ht]
\centering\small
\setlength{\tabcolsep}{3pt}
\caption{Null detection on Mind2Web crops: AUROC of $p(\bot)$ and abstention rate on null steps (\%).}
\label{tab:null}
\begin{tabular}{lcccccc}
\toprule
& \multicolumn{3}{c}{AUROC} & \multicolumn{3}{c}{Abstention} \\
\cmidrule(lr){2-4}\cmidrule(lr){5-7}
Model & Task & Web & Dom. & Task & Web & Dom. \\
\midrule
Independent & 0.877 & 0.790 & 0.824 & 17.3 & 4.5 & 9.9 \\
Conditioned & 0.943 & 0.919 & 0.947 & 40.7 & 43.2 & 33.8 \\
\quad + abstention & \textbf{0.969} & \textbf{0.935} & \textbf{0.949} & \textbf{79.0} & \textbf{77.3} & \textbf{77.5} \\
\bottomrule
\end{tabular}
\end{table}

\subsection{Abstention margin}
\label{app:margin}
The released conditioned model predicts $\bot$ only when $p(\bot)\ge\max_i p_i+\delta$ (\cref{tab:margin}). The rule is applied identically in every order, so order invariance is unaffected.
\begin{table}[ht]\centering\small
\caption{Abstention margin for the conditioned model (\%), selected on cross-task.}
\label{tab:margin}
\begin{tabular}{lccc}
\toprule
$\delta$ & cross-task accuracy & accuracy on steps with a positive & abstention on null steps \\
\midrule
$-0.2$ & 91.02 & 91.64 & 81.5 \\
$0$ & 91.70 & 92.52 & 79.0 \\
$+0.1$ & 91.77 & 92.75 & 76.5 \\
$+0.24$ (selected) & \textbf{92.07} & 93.23 & 74.1 \\
$+0.4$ & 91.92 & 93.23 & 71.6 \\
\midrule
cross-website at $\delta{=}0$ / $+0.24$ & 86.53 / 87.02 & 86.9 / 87.5 & 77.3 / 77.3 \\
cross-domain at $\delta{=}0$ / $+0.24$ & 89.96 / 90.20 & 90.7 / 91.3 & 77.5 / 72.1 \\
\bottomrule
\end{tabular}
\end{table}

\subsection{What the abstention stage adds}
\label{app:abststage}
At each model's own threshold, the released model is 2.6 points above the stage-2 row of \cref{tab:main} on cross-website and 2.7 on cross-domain, but part of that difference is the position of the threshold, which drifts between checkpoints (\cref{app:gate}). The stage also changes two things at once: it oversamples null steps (24.7\% of its rows against 5.3\% in training) and trains one more epoch at a fifth of the learning rate. \Cref{tab:abststage} therefore compares, for three seeds of the gold-label recipe and at the matched operating point of \cref{tab:matched}, the stage-2 checkpoint selected on cross-task, the same checkpoint after the abstention stage, and a low-learning-rate control: the same checkpoint trained with the same learning rate, epoch and number of rows (5{,}005) on a seeded random sample of the ordinary training steps, with their natural 5.3\% null steps.

Before the results we fixed the rule that the null oversampling would be credited only if the abstention stage beat both the selected checkpoint and the control in every seed, by more than the seed-to-seed standard deviation of the recipe (about 0.8 points on cross-website). It does not. Over the three seeds the abstention stage adds $1.61\pm0.58$ points on cross-website and $0.94\pm0.92$ on cross-domain to the selected checkpoint, and the control adds $1.11\pm1.36$ and $0.52\pm0.61$. The stage beats the control by $-0.10$, $+1.57$ and $0.00$ points on cross-website and by $+0.02$, $+0.74$ and $+0.49$ on cross-domain; only seed 7 is significant per test ($p=0.044$ and $0.015$), and none after Holm's adjustment (\cref{app:cluster}). Null abstention and null AUROC rise by similar amounts with either. Most of what the stage adds is therefore one more epoch at a low learning rate; oversampling null steps contributes about half a point on average, inconsistently across seeds. We keep the released recipe as trained but do not attribute its gain to the oversampling.

\begin{table}[ht]\centering\small
\caption{The abstention stage against a low-learning-rate control at the matched operating point (quantile threshold $q=0.906$, \cref{app:gate}), gold-label recipe: accuracy on all steps (\%) of the selected stage-2 checkpoint, after the abstention stage and after the control; the difference between the stage and the control with its exact McNemar $p$; and null abstention and null AUROC (\%) in the same three columns' order.}
\label{tab:abststage}
\setlength{\tabcolsep}{3.5pt}
\begin{tabular}{llccccc}
\toprule
Seed & Split & Selected & + Abstention & + Control & Stage $-$ control ($p$) & Null abstention / AUROC \\
\midrule
42 & Website & 83.87 & 85.25 & 85.35 & $-0.10$ (1.000) & 70.5 / 79.5 / 81.8 \quad 91.6 / 93.5 / 94.4 \\
 & Domain & 89.16 & 89.04 & 89.02 & $+0.02$ (1.000) & 76.6 / 83.8 / 79.7 \quad 94.7 / 94.9 / 95.3 \\
7 & Website & 84.07 & 85.25 & 83.68 & $+1.57$ (0.044) & 75.0 / 79.5 / 77.3 \quad 91.6 / 93.6 / 92.1 \\
 & Domain & 88.57 & 89.98 & 89.24 & $+0.74$ (0.015) & 79.3 / 79.7 / 82.0 \quad 95.4 / 95.5 / 95.9 \\
13 & Website & 82.50 & 84.76 & 84.76 & $\phantom{+}0.00$ (1.000) & 81.8 / 84.1 / 81.8 \quad 92.6 / 94.2 / 93.3 \\
 & Domain & 87.46 & 88.99 & 88.50 & $+0.49$ (0.095) & 77.5 / 81.1 / 82.9 \quad 93.7 / 95.0 / 95.1 \\
\midrule
Mean & Website & 83.48 & 85.09 & 84.60 & $+0.49$ & \\
 & Domain & 88.40 & 89.34 & 88.92 & $+0.42$ & \\
\bottomrule
\end{tabular}
\end{table}

\subsection{The joint encoder with isolated candidates}
\label{app:isolation}
In \onejev{}'s crops-in-state prompt each candidate is a line ``$i$: crop label''. Isolating the candidates in a layer means that the tokens of line $i$ see the prefix (system prompt, task and ``Candidates:'') and their own tokens only, while the question, the option letters and the answer token after the list see every candidate; positions are unchanged. A mask cannot do this for Qwen3.5: 24 of its 32 layers are gated linear attention, a recurrence that carries line $i$ into line $i+1$. An isolated layer is therefore computed on the whole sequence and, as continuations of a fork of the prefix's cache, once per candidate line; candidate tokens take the second result and all other tokens the first. A layer's output at a position depends only on that layer's inputs at earlier positions, so the stitched sequence is exact. With nothing isolated, this layer loop reproduces the model's forward exactly, and perturbing one candidate leaves the final states of every other candidate unchanged. The gradients of the prefix-shared computation match a reference that recomputes the prefix for every line (fp32 cosine at least 0.998 in every layer tested).

Without retraining, isolating every layer of the fine-tuned joint encoder costs 30.9 points, also when the gold is listed first and its own states are unchanged (70.1 against 92.5\%), so most of that drop is the shift to an input the model never saw. Isolating only the eight full-attention layers, which is what a mask alone would do, costs 2.2 points. We therefore retrain \onejev{} from its release with every layer isolated and everything else as in \cref{tab:hparams} (data, LoRA, epochs, batch, seed; 216 steps, 4.1\,h on one A800-80GB), and evaluate it isolated. \Cref{tab:isolation} lists the results. The isolated model is one run. Against the mean of the five joint runs (\onejev{} seed sd 0.29 / 0.26 ranking points), the loss is 0.72 $[-1.98,+0.47]$ (website) and 0.28 $[-0.79,+0.22]$ (domain), each step's correctness averaged over the five runs; the original joint run is the best of the five on cross-website. As an exploratory equivalence check (margins not fixed in advance; the intervals reflect test sampling, not training seeds), the task-level 90\% bootstrap intervals of the loss against the original run are $[-2.22,-0.09]$ (website) and $[-0.89,+0.13]$ (domain), within $\pm2.5$ and $\pm1.5$ points but not $\pm2$ on cross-website; the \iris{} rows are the stage-2 models (seed 42 is the one of \cref{tab:modality}).

\begin{table}[ht]\centering\small
\caption{Ranking accuracy (\%) on Mind2Web steps with a positive, original order, with the paired difference from the fine-tuned joint encoder: exact McNemar $p$ and task-level 95\% bootstrap interval (2{,}000 resamples). One run per model; the five-run joint mean (compared in the text) comes from the labelling path of \cref{tab:main}, where the original run reads 92.09 / 94.27.}
\label{tab:isolation}
\setlength{\tabcolsep}{4pt}
\begin{tabular}{lcccc}
\toprule
& \multicolumn{2}{c}{Cross-website} & \multicolumn{2}{c}{Cross-domain} \\
\cmidrule(lr){2-3}\cmidrule(lr){4-5}
Model & Ranking & Difference ($p$; interval) & Ranking & Difference ($p$; interval) \\
\midrule
\onejev{} fine-tuned (joint) & 92.09 & -- & 94.24 & -- \\
\quad five runs, mean & 91.68 & -- & 94.13 & -- \\
\quad not retrained, all layers isolated & 61.15 & $-30.94$ ($<10^{-6}$; $[-35.0,-27.0]$) & -- & -- \\
\quad not retrained, full-attention only & 89.93 & $-2.16$ (0.001; $[-3.3,-0.9]$) & -- & -- \\
\quad retrained, all layers isolated & 90.96 & $-1.13$ (0.10; $[-2.4,+0.1]$) & 93.85 & $-0.39$ (0.21; $[-1.0,+0.3]$) \\
\iris{} conditioned, stage 2, seed 42 & 87.46 & $-4.63$ & 90.97 & $-3.27$ \\
\iris{} conditioned, stage 2, 3 seeds & 88.73 & $-3.36$ & 92.00 & $-2.24$ \\
\bottomrule
\end{tabular}
\end{table}

\subsection{Candidate interaction: details}
\label{app:encinter}
\paragraph{Readouts over pooled vectors.} Set attention also changes the mixture-trained independent model by $+0.9$ / $-0.5$ / $+1.6$ points (no continued-training control for that model). The token-level read reached 87.7 and 87.5\% on cross-website positives after 3{,}200 and 6{,}400 training rows, against 86.9\% before; the multi-vector memory was run only on a 300-step diagnostic subset. The set-attention block learned almost nothing (\cref{app:geometry}).

\paragraph{Interaction inside the encoder.} The gated delta rule~\citep{deltanet} is affine in its initial state, so a candidate started from $S_{\text{prefix}}+\gamma A_i$ produces exactly the output of one recurrence from the prefix plus that of a second recurrence with zero values started from $\gamma A_i$; both reuse the layer's queries, keys and gates, so the interaction costs one extra kernel call per Gated DeltaNet layer and no other computation. $A_i=\sum_{j\ne i}W_{ij}B_j$, where $B_j$ is candidate $j$'s write term (its final state when run from zero) and $W_{ij}$ is the mean, over every order in which the other candidates could precede $i$, of the decay applied to $B_j$ by the candidates after it (PICASO-S weights, \citealp{picaso}; exact for the scalar decay, the delta term ignored). $\gamma$ is a learned gate per layer and head, initialised at zero, so training starts from the plain model; padding steps are identity transitions. $A_i$ depends only on the multiset of the other candidates, so every layer is permutation-equivariant over a request's candidates and the decision stays order-invariant. Tests: the two-call decomposition equals one recurrence to $10^{-16}$ in fp64; the weights equal a brute-force average over all orders; with zero gates the vectors equal the plain model's; permuting candidates, replacing another request of the batch and changing the padded width leave them unchanged; the flip rate over five orders is zero on cross-website.

Trained for one epoch from the released model with the recipe of the continued-training control (seed 42), it changes ranking accuracy after head refit by $+0.10$ (website) and $-0.30$ (domain), $-0.22$ together ($[-0.73,+0.30]$), and is $+0.08$ on cross-task. The gates stay small (mean $|\gamma|$ 0.020), and at the trained weights the interaction moves the candidate vectors by 0.4\% and changes no first choice on 40 cross-website records. Two causes are visible. First, the aggregate is nearly the same for every candidate of a request: the part of $A_i$ specific to candidate $i$ is 11\% of the part common to all (a uniform leave-one-out mean gives $1/(K-1)=7\%$), and its weights depend on decay, not on content, so a candidate cannot single out the competitor it is confused with. Second, the training signal is sparse: the starting model ranks 97.4\% of the training steps correctly.

\subsection{What the gap to the isolated joint encoder is made of}
\label{app:gap}
The isolated \onejev{} of \cref{app:isolation} still leads the state-conditioned \iris{} (stage 2, seed 42) by 3.50 / 2.88 ranking points on cross-website / cross-domain. Three explanations remain: pooling each candidate to one vector loses information, the joint encoder's answer token compares candidates at the token level, or the two training recipes differ. We test them on frozen features: the \iris{} encoder is fixed, its candidate token states at layers 15 and 31 are exported, and heads are trained on them (3 seeds each; epoch chosen on cross-task). The checkpoint's ranking on the training split is 96.3\%, against 87--91\% on the test splits, so every probe is also fitted on 80\% of the cross-task tasks, which the encoder never saw, with the other 20\% choosing the epoch. Frozen probes are one-sided: a gain is evidence that pooling loses information, no gain does not rule out what end-to-end training could find.

\paragraph{The same head on both encoders.} One vector per candidate (the last token of its line, final layer) and one for the request are taken from the isolated \onejev{} and from \iris{}, and the same head (input RMS normalisation, projections of the same seed, \iris{}'s decision stack) is trained on each. \onejev{}'s vectors are 0.3--1.1 points worse on cross-website and 1.4--2.1 worse on cross-domain ($p<0.001$), while its own readout of all candidate tokens is 2.1--3.8 points above them. Its lead is thus not in a better pooled representation; the comparison is unfavourable to \onejev{}, whose line-final token was never trained to summarise a candidate.

\paragraph{Token-level readers on Iris's features.} Results are in \cref{tab:gap}. P0 continues training \iris{}'s head on its pooled vectors; P1 adds a two-block cross-attention reader over all of a candidate's tokens, P1-last the same reader over its last tokens only, P2 a second, gated stream over the other candidates' tokens, and T replaces the pooled vector by the reader's output (positive control). The pre-specified primary comparison, P1 $-$ P0 on cross-website and cross-domain together, is $+0.29$ points ($[+0.01,+0.57]$, task-level) when fitted on the training split and $+0.10$ ($[-0.18,+0.38]$) when fitted on cross-task: at most 19\% and 13\% of the gap. Reading the other candidates' tokens adds nothing ($+0.02$) or hurts ($-0.38$, Holm-adjusted $p=0.027$). T reaches P0, so the readers can learn.

\paragraph{The head.} P0 gains 2.4 / 1.8 ranking points over the checkpoint's head on this seed-42 model (1.4 / 1.6 when fitted on cross-task). Across checkpoints (\cref{tab:refit}; the refit is P0 on the training split, 3 seeds, epoch chosen on cross-task) the gain is large only where the head had converged worst: on seed 42 it is $+4.20$ points at the matched operating point ($p<0.001$), on seeds 7 and 13 $+0.49$ and $+1.21$, and on the released model nothing, its abstention stage having already trained the head at a low learning rate. The refit thus removes most of the seed-to-seed spread of the stage-2 final weights (matched cross-website accuracy 80.4--84.5\% before, 84.6--85.1\% after) and brings their mean to 84.89 / 87.99\%, against 85.09 / 89.34\% for the released recipe (\cref{tab:matched}); one more end-to-end epoch of the released model instead lowers its ranking accuracy. Against the three-seed stage-2 mean, the joint encoder's lead of 3.36 / 2.24 ranking points splits into 1.13 / 0.39 from candidates seeing each other (\cref{app:isolation}), 1.17 / 0.69 that the head refit recovers, and 1.06 / 1.16 that neither accounts for; frozen token readers do not reduce the last part. Against the mean of the four further \onejev{} runs (91.57 / 94.10 ranking), the lead is 2.84 / 2.10 points and the unexplained part 0.54 / 1.02. Skipping checkpoint selection costs little once the abstention stage follows: run from the stage-2 final weights of each seed, it reaches 84.66 / 88.84\% at the matched operating point, 0.43 / 0.49 below the released recipe ($[-1.31,+0.44]$ and $[-0.87,-0.13]$; seeds averaged per step), and from the refitted heads 84.46 / 88.76\%, so the refit adds nothing once the abstention stage has trained the head.

\begin{table}[ht]\centering\small
\caption{Refitting the head on the frozen encoder (mean of 3 refit seeds): ranking accuracy on steps with a positive and accuracy on all steps at the matched operating point of \cref{tab:matched} (\%), checkpoint $\to$ refit, cross-website / cross-domain. Both sides come from the batched feature export, whose bf16 arithmetic changes a few first choices, so checkpoint values differ from \cref{tab:main,tab:matched,tab:isolation} by up to 0.4 points; differences are taken within this path. The control is the released model trained one more epoch end to end.}
\label{tab:refit}
\setlength{\tabcolsep}{4pt}
\begin{tabular}{lcc}
\toprule
Checkpoint & Ranking & Matched operating point \\
\midrule
Stage 2, seed 42 & 87.36 $\to$ 89.72 / 90.94 $\to$ 92.74 & 80.43 $\to$ 84.63 / 86.74 $\to$ 88.43 \\
Stage 2, seed 7 & 89.00 $\to$ 89.55 / 92.98 $\to$ 93.09 & 84.46 $\to$ 84.96 / 88.89 $\to$ 88.86 \\
Stage 2, seed 13 & 89.83 $\to$ 90.44 / 92.09 $\to$ 92.23 & 83.87 $\to$ 85.09 / 86.27 $\to$ 86.69 \\
\quad mean & 88.73 $\to$ 89.90 / 92.00 $\to$ 92.69 & 82.92 $\to$ 84.89 / 87.30 $\to$ 87.99 \\
Released (seed 42) & 90.54 $\to$ 90.51 / 93.38 $\to$ 93.60 & 85.45 $\to$ 85.51 / 89.11 $\to$ 89.19 \\
\quad + one end-to-end epoch & 89.72 $\to$ 89.86 / 92.28 $\to$ 92.69 & 84.37 $\to$ 85.45 / 87.80 $\to$ 88.81 \\
\bottomrule
\end{tabular}
\end{table}

\paragraph{Where the errors are.} Splitting the steps with a positive of cross-website and cross-domain into quarters by the cosine between the gold's vector and its hardest negative's (continued-training control after head refit), ranking accuracy is 100.0, 99.8 and 99.1\% on the first three quarters and 69.6\% on the last (cosine above 0.9975): almost every error is a step whose two candidates are nearly the same vector. Nothing we tried helps there (\cref{tab:q4}): reading the other candidates' tokens on frozen features (P2 against P1, \cref{tab:gap}), the in-encoder interaction, training a LoRA on the vision tower, or one more end-to-end epoch, which lowers the released model's hard-quarter accuracy from 72.2 to 70.0\%.

\paragraph{Resolution.} \onejev{} reads crops with Qwen3.5's default minimum of $256^2$ pixels, \iris{} with $224^2$; since 93\% of the crops are smaller, the extra 25\% of image tokens come from upscaling. Retraining stage 2 at $256^2$ (seeds 42, 7 and 13, each paired with the stage-2 run of the same seed, each model's own head) changes ranking accuracy by $+0.17$ on cross-website ($[-0.70,+1.04]$) and $-0.38$ on cross-domain ($[-0.82,+0.08]$), and on the hard quarter by $+2.46$ and $-0.59$ (both intervals include zero). It is lower on cross-task ($-1.41$, $[-2.20,-0.60]$), mostly through seed 42, which is also 2.2 points lower on cross-domain while seed 13 is 1.2 higher. Resolution does not explain the gap.

\begin{table}[ht]\centering\small
\caption{Ranking accuracy (\%) on the hard quarter of steps (gold and hardest negative vectors most alike; defined on the released model for the vision and end-to-end rows, on the control for the interaction row, on the mean of the two models for resolution) and on the other steps, cross-website and cross-domain together unless stated; differences are paired, with task-level 95\% intervals.}
\label{tab:q4}
\setlength{\tabcolsep}{4pt}
\begin{tabular}{llcc}
\toprule
Comparison & Models & Hard quarter & Other steps \\
\midrule
Frozen features, \cref{tab:gap} (fit on train / on cross-task) & P2 $-$ P1 & $+0.25$ / $-1.17$ & -- \\
In-encoder interaction, after head refit & $-$ continued training & $-0.42$ & $-0.15$ \\
Vision-tower LoRA (one more epoch) & $-$ continued training & $-1.42$ $[-3.77,+0.97]$ & $-0.39$ \\
One more end-to-end epoch & released model $\to$ control & $72.23\to69.97$ & $99.55\to99.05$ \\
Resolution $256^2$, 3 seeds (website / domain) & $-$ stage 2 & $+2.46$ / $-0.59$ & -- \\
\bottomrule
\end{tabular}
\end{table}

\begin{table}[ht]\centering\small
\caption{Ranking accuracy (\%) on steps with a positive of heads trained on the frozen \iris{} stage-2 encoder (seed 42), mean of 3 seeds, fitted on the training split or on 80\% of the cross-task tasks. The checkpoint's own head reaches 87.36 / 90.94, the isolated \onejev{} 90.96 / 93.85.}
\label{tab:gap}
\begin{tabular}{lcccc}
\toprule
& \multicolumn{2}{c}{Fitted on training split} & \multicolumn{2}{c}{Fitted on cross-task} \\
\cmidrule(lr){2-3}\cmidrule(lr){4-5}
Head & Website & Domain & Website & Domain \\
\midrule
P0: decision head on pooled vectors & 89.72 & 92.74 & 88.73 & 92.51 \\
P0 with a 3-block head & 89.59 & 92.60 & 88.90 & 92.50 \\
P1: + reader over the candidate's tokens & 89.83 & 93.07 & 89.00 & 92.57 \\
P1-last: + reader over its last tokens only & 89.17 & 92.87 & 88.04 & 91.96 \\
P2: P1 + gated read of the other candidates & 90.03 & 93.05 & 88.52 & 92.21 \\
T: reader output only (positive control) & 89.83 & 92.77 & 88.73 & 91.81 \\
\bottomrule
\end{tabular}
\end{table}

\subsection{AndroidControl at a matched operating point}
\label{app:acmatched}
\Cref{tab:ac-matched} sets every AndroidControl model to the same false-abstention rate, 5\% on the positive steps of the synthetic-null validation split (368 steps with a positive and 47 null), the split on which the AndroidControl abstention margin is chosen. Unlike the quantile threshold on Mind2Web, this uses validation labels; no test split is used. Refitting the head on the frozen encoder (three refit seeds per model, epoch chosen on validation; batched export path) changes the state-conditioned model's ranking by $+0.47$ / $+0.16$ / $+0.33$ / $+0.15$ points (IDD / app / task / category, mean of the three seeds), against a gap of 2.5--4.2 to \onejev{}. Removing the gold from a step models a missing target, not a request that no candidate on the screen could satisfy, so these nulls measure only the first kind of null. Rows average five \onejev{} runs and three state-conditioned seeds (independent encoding: one run). The threshold transfers about equally for the two: on the test sub-splits, accuracy on steps with a positive falls 3.5--4.5 points below the threshold-free ranking accuracy for \onejev{} and 3.6--5.2 for the state-conditioned model (1.8--3.4 for independent encoding), so the matched-point gap to \onejev{} on the unmodified sub-splits, $-2.55$ / $-4.56$ / $-4.44$ / $-4.92$ points (IDD / app / task / category; episode-level 95\% intervals $[-3.66,-1.41]$, $[-5.74,-3.36]$, $[-5.68,-3.26]$, $[-6.32,-3.64]$, correctness averaged over runs per step), is close to the gap in original-order accuracy ($-2.83$ / $-3.89$ / $-3.98$ / $-4.32$) and in ranking ($-2.45$ / $-3.84$ / $-3.92$ / $-4.19$). Null AUROC is 83.1--83.9 for \onejev{}, 81.6--83.0 for the state-conditioned model and 73.0--75.2 for independent encoding.
\begin{table}[ht]\centering\small
\caption{AndroidControl at 5\% false abstention on the synthetic-null validation split, original order (\%). Unmodified: accuracy on all steps of the test sub-split. Synthetic null: accuracy on all steps / on steps with a positive / abstention on null steps of its synthetic-null copy; then ranking accuracy (null option ignored) and null AUROC, which do not depend on the threshold.}
\label{tab:ac-matched}
\resizebox{\textwidth}{!}{%
\begin{tabular}{llcccc}
\toprule
Model & Test copy & IDD & App & Task & Category \\
\midrule
\onejev{}-4B fine-tuned (joint), 5 runs & unmodified & 79.10 & 75.65 & 77.28 & 76.29 \\
 & synthetic null & 75.50 / 79.54 / 44.3 & 73.04 / 75.95 / 49.6 & 74.59 / 77.53 / 51.6 & 73.59 / 76.47 / 51.0 \\
 & ranking / AUROC & 83.08 / 83.8 & 80.46 / 83.1 & 81.56 / 83.9 & 80.61 / 83.4 \\
\iris{} independent & unmodified & 73.36 & 69.26 & 69.64 & 68.25 \\
 & synthetic null & 67.86 / 73.82 / 21.9 & 64.17 / 70.14 / 16.1 & 64.53 / 70.85 / 15.0 & 62.94 / 69.01 / 15.4 \\
 & ranking / AUROC & 77.23 / 73.0 & 72.03 / 74.7 & 72.64 / 75.2 & 70.76 / 75.0 \\
\iris{} state-conditioned, 3 seeds & unmodified & 76.55 & 71.09 & 72.84 & 71.37 \\
 & synthetic null & 72.99 / 76.82 / 43.4 & 68.82 / 71.36 / 48.4 & 70.26 / 72.95 / 49.2 & 68.91 / 71.50 / 48.6 \\
 & ranking / AUROC & 80.43 / 81.6 & 76.56 / 81.8 & 77.48 / 83.0 & 76.31 / 82.1 \\
\bottomrule
\end{tabular}}
\end{table}

\subsection{Task-level uncertainty}
\label{app:cluster}
The exact McNemar tests and bootstrap intervals elsewhere in the paper resample steps. Steps of one Mind2Web task share the website, the instruction and much of the page, so they are not independent: cross-website has 1{,}017 steps from 142 tasks and cross-domain 4{,}042 steps from 694. \Cref{tab:cluster} repeats the main paired comparisons with whole tasks as the resampling unit (2{,}000 resamples) and a task-level sign-flip permutation test (10{,}000 flips). In these rows, task-level intervals are up to 27\% wider than step-level ones, and no conclusion in the paper changes: the gap to the joint encoder at the matched operating point is not distinguishable from zero on cross-website and is just below zero on cross-domain; the abstention stage is significant on cross-domain for seeds 7 and 13 but not distinguishable from the low-learning-rate control; the distilled recipe's lower cross-domain null AUROC holds for both seeds.

The paper reports many paired tests, so we also adjust for multiplicity. We group the task-level tests into four families fixed by the questions they answer: the gap to the joint encoder at the matched operating point (mean over seeds), independent against conditioned encoding, the abstention stage (with and without, and against the low-learning-rate control), and distillation (\cref{app:distill}). Each split and metric of a family's comparisons is one test, 84 in all. Holm's adjustment within each family leaves below 0.05 all six encoder tests; for the abstention stage, its gain over the selected checkpoint on cross-domain accuracy for seeds 7 and 13 (adjusted $p=0.017$ and $0.020$), and for seed 13 on cross-domain null AUROC (0.017) and cross-website ranking (0.007), but none of its differences from the low-learning-rate control; and, for distillation, the cross-domain null AUROC drop of seed 7 (0.017), the drop from the distilled seed-42 abstention stage (0.017), and the stage-2 ranking gains of the distilled seed-42 run (0.030 and 0.007). Single tests quoted in \cref{app:distill} do not survive it: the 2.65-point cross-website loss from the distilled seed-7 abstention stage (0.19), the 0.94-point cross-domain loss of distilled seed 7 (0.53) and the seed-42 cross-domain AUROC drop (0.29). Their Benjamini--Hochberg $q$-values over all 84 tests are 0.029, 0.072 and 0.040. No test of the matched-point gap is significant after adjustment. Unless stated otherwise, $p$-values elsewhere in the paper are per test and unadjusted.
\begin{table}[ht]\centering\small
\caption{Paired differences $B-A$ (points) at the matched operating point ($q=0.906$; ranking and AUROC need no threshold), with step-level and task-level 95\% bootstrap intervals, the task-level $p$-value and its Holm adjustment within the family of the comparison. The first row averages each step's correctness over three \iris{} seeds, so seed-to-seed variation is not part of its error.}
\label{tab:cluster}
\resizebox{\textwidth}{!}{%
\begin{tabular}{llccccc}
\toprule
Comparison ($A\to B$) & Split, metric & $B-A$ & Step-level CI & Task-level CI & Task-level $p$ & Holm \\
\midrule
\onejev{} fine-tuned $\to$ \iris{} conditioned (3 seeds) & Website, all steps & $-0.95$ & $[-2.72,+0.82]$ & $[-3.01,+1.14]$ & 0.401 & 1.000 \\
 & Domain, all steps & $-0.87$ & $[-1.64,-0.12]$ & $[-1.77,-0.02]$ & 0.056 & 0.282 \\
 & Domain, ranking & $-0.83$ & $[-1.41,-0.21]$ & $[-1.48,-0.15]$ & 0.016 & 0.094 \\
\iris{} independent $\to$ conditioned (stage 2) & Website, all steps & $+11.60$ & $[+8.65,+14.36]$ & $[+8.10,+15.19]$ & $<0.001$ & 0.001 \\
 & Domain, all steps & $+13.14$ & $[+11.78,+14.55]$ & $[+11.40,+14.87]$ & $<0.001$ & 0.001 \\
Abstention stage, seed 42 & Website, all steps & $+1.38$ & $[-0.10,+3.05]$ & $[-0.63,+3.36]$ & 0.228 & 1.000 \\
 & Domain, all steps & $-0.12$ & $[-0.84,+0.59]$ & $[-0.88,+0.66]$ & 0.803 & 1.000 \\
Abstention stage, seed 7 & Website, all steps & $+1.18$ & $[-0.39,+2.76]$ & $[-0.58,+2.98]$ & 0.240 & 1.000 \\
 & Domain, all steps & $+1.41$ & $[+0.67,+2.15]$ & $[+0.70,+2.14]$ & $<0.001$ & 0.017 \\
Abstention stage, seed 13 & Domain, all steps & $+1.53$ & $[+0.74,+2.33]$ & $[+0.72,+2.38]$ & 0.001 & 0.020 \\
Low-LR control $\to$ stage, seed 7 & Website, all steps & $+1.57$ & $[+0.20,+3.05]$ & $[+0.20,+2.88]$ & 0.046 & 1.000 \\
Gold $\to$ distilled, final, seed 42 & Domain, null AUROC & $-1.89$ & $[-3.02,-0.80]$ & $[-3.17,-0.55]$ & 0.010 & 0.290 \\
Gold $\to$ distilled, final, seed 7 & Domain, null AUROC & $-2.30$ & $[-3.35,-1.27]$ & $[-3.53,-1.05]$ & $<0.001$ & 0.017 \\
\bottomrule
\end{tabular}}
\end{table}

\subsection{Training trajectory of the conditioned model}
\label{app:trajectory}
\Cref{tab:trajectory} lists the cross-task accuracy of the conditioned stage-2 checkpoints, and \cref{fig:trajectory} plots it.
\begin{table}[ht]\centering\small
\caption{Cross-task accuracy (\%) of the conditioned stage-2 checkpoints in the original order. From 1.2 epochs on, accuracy on steps with a positive stays at 90.8--92.8\%, and the total follows the abstention rate on the 81 null steps. The epoch-2, step-50 checkpoint scores 85.74\% on cross-website and 90.13\% on cross-domain.}
\label{tab:trajectory}
\begin{tabular}{lcccc}
\toprule
Checkpoint & Rows seen & Accuracy & Positive & Null \\
\midrule
epoch 1, step 100 & 3{,}200 & 86.69 & 91.5 & 12.3 \\
epoch 1, step 150 & 4{,}800 & 88.33 & 92.0 & 32.1 \\
epoch 1, step 200 & 6{,}400 & 87.51 & 91.4 & 27.2 \\
epoch 2, step 50 & 9{,}351 & \textbf{89.75} & 92.4 & 48.1 \\
epoch 2, step 100 & 10{,}951 & 89.68 & 91.2 & \textbf{65.4} \\
epoch 2, step 150 & 12{,}551 & 88.71 & 90.8 & 55.6 \\
epoch 2, step 200 & 14{,}151 & 88.11 & 92.2 & 24.7 \\
final (2 epochs) & 15{,}502 & 89.60 & \textbf{92.8} & 40.7 \\
\bottomrule
\end{tabular}
\end{table}
\begin{figure}[ht]\centering
\includegraphics[width=0.62\linewidth]{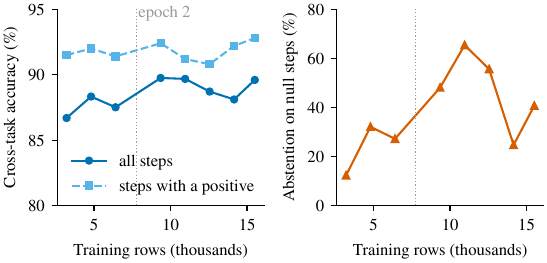}
\caption{The total follows abstention, not ranking: cross-task accuracy of the conditioned stage-2 checkpoints, in total, on steps with a positive, and abstention on the 81 null steps.}
\label{fig:trajectory}
\end{figure}

\subsection{Zero-shot transfer and stage 1}
\label{app:zeroshot}
With \jeva{}-4B's weights and no training on images, the independent model reaches 31.11 / 25.07 / 32.73\% on the three splits (chance 6.25\%), with a flip rate of zero on all 6{,}396 records: a decision head trained only on text vectors reads image-candidate vectors above chance. Stage 1 alone (80k OneJev-Data rows, no image candidates) gives 33.88 / 31.47 / 41.54\%. An earlier checkpoint of the same run, at 64k rows, gave 35.30 / 33.92 / 40.77\%, so the generalisation splits move by about 2 points between checkpoints that never saw Mind2Web. Stage 2 then adds 44 / 41 / 33 points: the decision head needs candidate-side training to read a new candidate modality.

\subsection{Mixture training}
\label{app:mixture}
A second independent model is trained from the stage-1 checkpoint for one epoch on 65{,}502 rows: the 7{,}751 Mind2Web steps twice, 40{,}000 reranking records mined from the M-BEIR training splits~\citep{uniir} of ten MMEB-V2 tasks~\citep{vlm2vec2} with the first-stage retriever's hard negatives, and 10{,}000 replayed OneJev-Data rows. On Mind2Web it reaches 78.91 / 74.04 / 75.06\%, matching the Mind2Web-only model.

\subsection{Typed-decision benchmarks}
\label{app:jevbench}
\Cref{tab:jevbench} reports \iris{} on the typed-decision benchmarks of \jeva{} and \onejev{}, whose candidates are text.
\begin{table}[ht]\centering\small
\caption{Accuracy (\%) on JevBench~\citep{jevbench} and the 154 labelled dev rows of ImajevBench~\citep{imajev}. The flip rate of \iris{} is zero on every split. Training on visual candidates alone reduces accuracy on the text-only JevBench rows; replaying OneJev-Data recovers part of the loss on the multimodal-state rows it resembles.}
\label{tab:jevbench}
\resizebox{\textwidth}{!}{%
\begin{tabular}{lcccc}
\toprule
Model & JevBench orig. & JevBench easy & JevBench hard & ImajevBench dev \\
\midrule
OpenJev-27B (joint; reported in \citet{metaencoder2026}) & 98.61 & 100.0 & 76.58 & -- \\
MetaEncoder-30B (bi-encoder; reported in \citet{metaencoder2026}) & 94.44 & 100.0 & 73.87 & 85.55 \\
\iris{}-4B, \jeva{}-4B weights & \textbf{100.0} & \textbf{100.0} & 60.36 & 70.13 \\
\iris{}-4B stage 1 & 95.83 & \textbf{100.0} & 48.65 & 74.68 \\
\iris{}-4B stage 2 (Mind2Web crops only) & 75.00 & \textbf{100.0} & 29.73 & 61.04 \\
\iris{}-4B mixture & 73.61 & 97.92 & 44.14 & 65.58 \\
\bottomrule
\end{tabular}}
\end{table}

\subsection{Reranking retrieval candidates}
\label{app:mmeb}
Following \citet{rerank2026}, we rerank the top 16 candidates of Qwen3-VL-Embedding-2B on the 12 image-retrieval tasks of MMEB-V2~\citep{vlm2vec2} (1{,}000 queries per task, hit@1). Our reproduction of the first stage gives a mean hit@1 of 74.88 (\cref{tab:mmeb}).
\begin{table}[ht]\centering\small
\caption{Reranking the top 16 of the first-stage retriever on MMEB-V2 image retrieval (hit@1). Reported rows are from \citet{rerank2026}. VisDial and Wiki-SS-NQ are held out from the mixture.}
\label{tab:mmeb}
\setlength{\tabcolsep}{3pt}
\resizebox{\textwidth}{!}{%
\begin{tabular}{lcccccccccccc|c}
\toprule
& COCO i2t & VisNews i2t & VisDial & COCO t2i & VisNews t2i & WebQA & EDIS & Wiki-SS & CIRR & NIGHTS & OVEN & FashionIQ & mean \\
\midrule
First stage & 74.6 & 75.4 & 81.5 & 78.0 & 73.1 & 90.1 & 92.6 & 83.7 & 65.1 & 69.2 & 76.5 & 38.8 & 74.9 \\
Reranker-8B (supervised; reported) & 77.7 & 79.6 & 89.5 & 82.2 & 75.1 & 92.7 & 92.4 & 86.4 & 74.1 & 70.8 & 78.6 & 41.2 & 78.4 \\
27B letter slots, zero-shot (reported) & 70.5 & 74.2 & 84.8 & 80.0 & 76.2 & 92.4 & 95.0 & 87.3 & 72.0 & 68.8 & 78.0 & 41.9 & 76.8 \\
\iris{} stage 1, zero-shot & 41.4 & 40.2 & 15.4 & 21.6 & 27.2 & 34.4 & 42.1 & 13.5 & 4.4 & 4.9 & 41.6 & 5.5 & 24.4 \\
\iris{} mixture & 47.6 & 49.4 & 30.4 & 44.8 & 47.0 & 82.0 & 69.3 & 48.8 & 4.5 & 6.0 & 49.3 & 4.6 & 40.3 \\
\bottomrule
\end{tabular}}
\end{table}
The independent model does not reach its first stage on any task. On the three tasks whose query is an image or an image with an edit (CIRR, NIGHTS, FashionIQ) it stays at 4--6\%, at or below a uniform guess over the 16 candidates (6.1, 6.2 and 5.1\%, given that the first stage places the gold in its top 16 for 97.7, 99.5 and 81.7\% of queries): a candidate vector computed without the query image cannot carry a comparison with it. This is the comparison the state-conditioned encoder supplies, and reranking, where each query brings its own candidates, is a setting in which giving up cross-request reuse costs nothing; we have not yet trained a conditioned reranker.

\subsection{Crop or label: which part of a candidate is used}
\label{app:modality}
Every crop candidate carries a short text label (\cref{app:data}), so accuracy on crop candidates alone does not show that the model reads the crop. We compare three candidate inputs: the crop and its label, as everywhere else in the paper; the crop alone, with an empty label; and the label alone, with no image. The state, the question, the gold index and the null steps are unchanged. Candidates that become identical once one part is removed, such as two elements labelled \texttt{div} once their crops are gone, are kept, so that a model unable to tell them apart gives them equal scores and the ablation counts that inability.

The stage-2 state-conditioned and independent models on Mind2Web (\cref{tab:main}, before the abstention stage) and the state-conditioned and independent models on AndroidControl (\cref{tab:ac}) are evaluated on copies of their test splits in which every candidate keeps only its crop or only its label. These models were trained with both parts and are tested with one missing, so the drop combines the loss of information with a shift between training and test inputs; we report these numbers as indicative only.

All models are evaluated in the original order. \Cref{tab:modality} reports ranking accuracy on steps with a positive, which does not depend on the abstention threshold, and \cref{tab:modality-auroc} the Mind2Web null AUROC.

\begin{table}[ht]\centering\small
\caption{Crop or label, without retraining: ranking accuracy (\%) on steps with a positive with full candidates (crop and label), the crop only and the label only. Every difference from full candidates has an exact McNemar $p<10^{-6}$. Mind2Web null AUROC is in \cref{tab:modality-auroc}.}
\label{tab:modality}
\resizebox{\textwidth}{!}{%
\begin{tabular}{llccccccc}
\toprule
& & \multicolumn{3}{c}{Mind2Web ranking} & \multicolumn{4}{c}{AndroidControl ranking} \\
\cmidrule(lr){3-5}\cmidrule(lr){6-9}
Encoder & Candidates & Task$^\dagger$ & Website & Domain & IDD & App & Task & Category \\
\midrule
State-conditioned & crop + label & 92.99 & 87.46 & 90.97 & 80.05 & 78.00 & 78.60 & 77.93 \\
 & crop only & 80.41 & 81.29 & 82.96 & 75.15 & 74.03 & 74.94 & 73.99 \\
 & label only & 61.70 & 51.08 & 54.35 & 58.05 & 55.23 & 57.48 & 55.67 \\
Independent & crop + label & 82.80 & 75.44 & 78.82 & 76.91 & 71.51 & 72.28 & 70.43 \\
 & crop only & 67.91 & 68.24 & 68.46 & 71.46 & 67.85 & 68.87 & 66.96 \\
 & label only & 57.17 & 43.17 & 48.35 & 55.07 & 49.92 & 51.93 & 49.88 \\
\bottomrule
\end{tabular}}
\end{table}

\begin{table}[ht]\centering\small
\caption{Crop or label, without retraining: null AUROC (\%) on Mind2Web. AndroidControl has too few natural null steps.}
\label{tab:modality-auroc}
\begin{tabular}{llccc}
\toprule
Encoder & Candidates & Task$^\dagger$ & Website & Domain \\
\midrule
State-conditioned & crop + label & 94.3 & 91.9 & 94.7 \\
 & crop only & 92.2 & 88.4 & 87.2 \\
 & label only & 77.6 & 80.7 & 74.8 \\
Independent & crop + label & 87.7 & 79.0 & 82.4 \\
 & crop only & 83.4 & 74.6 & 76.6 \\
 & label only & 78.5 & 73.2 & 72.9 \\
\bottomrule
\end{tabular}
\end{table}

Removing the crops costs far more than removing the labels: 26 to 37 points of Mind2Web ranking accuracy against 6 to 15, and 20 to 23 points on AndroidControl against 3 to 5. The models rely mainly on the crops, and the label adds to them. Because the models were trained with both parts, these drops combine the lost information with the shift between training and test inputs.

\paragraph{With retraining.} To remove the shift, the state-conditioned stage-2 recipe (from the same stage-1 checkpoint, 2 epochs, learning rate $10^{-4}$, seed 42, final weights, no checkpoint selection) is retrained on crop-only and on label-only copies of the 7{,}751 Mind2Web training steps and evaluated on the matching test copies (\cref{tab:modality-retrained}). Without labels the model loses 2.5 to 5.3 points of ranking accuracy; without crops it loses 29 to 35, and retraining recovers only 1.3 to 2.5 of them, because many labels are shared or uninformative (\texttt{div}, \texttt{a}). The model reads the crops, and the label is a small addition. Each variant is trained once.

\begin{table}[ht]\centering\small
\caption{Crop or label, retrained: the state-conditioned stage-2 recipe trained and tested on the same candidate input (Mind2Web, seed 42, final weights). Ranking accuracy on steps with a positive / null AUROC (\%); in parentheses, the ranking difference from full candidates, all with exact McNemar $p<0.05$.}
\label{tab:modality-retrained}
\begin{tabular}{lccc}
\toprule
Candidates & Task$^\dagger$ & Website & Domain \\
\midrule
crop + label & 92.99 / 94.3 & 87.46 / 91.9 & 90.97 / 94.7 \\
crop only & 87.74 / 94.2 ($-5.25$) & 84.99 / 89.0 ($-2.47$) & 87.09 / 89.8 ($-3.87$) \\
label only & 64.17 / 79.5 ($-28.82$) & 52.83 / 77.2 ($-34.64$) & 55.68 / 76.8 ($-35.29$) \\
\bottomrule
\end{tabular}
\end{table}

\subsection{Patch embedding as a matrix product}
\label{app:conv}
On an A800 in bf16 without \texttt{flash\_attn}, the patch-embedding \texttt{Conv3d} of the Qwen3.5 vision tower took 5.2\,s for one $448\times448$ image. The convolution has kernel equal to stride, so on the flattened patch tensor it equals a matrix product with the reshaped kernel. Replacing the module's forward by this product, with weights unchanged and a maximum absolute difference of $2\times10^{-6}$, brings the pass to 15\,ms. All latencies in this paper use this replacement, for every model alike; the slowdown is specific to cuDNN's bf16 Conv3d without \texttt{flash\_attn} on this setup, and with a fast native kernel the replacement is unnecessary.

\section{Calibrating the Abstention Gate}
\label{app:gate}

This appendix records how the instability of the abstention decision was found, traced to the gate's formula, and removed. All analyses use saved predictions and the checkpoints' head parameters; no model is retrained. The scripts, \texttt{diagnose\_abstention.py} and \texttt{test\_gate\_*.py}, are in the code release.

\subsection{Symptom}
Repeating the gold-label recipe and the distilled recipe of \cref{app:distill} with a second seed changed cross-website accuracy by up to 1.4 points, more than the distillation effect under test. Splitting accuracy into its two components on every periodic checkpoint showed where the variation lies: ranking accuracy on steps with a positive (null option ignored) moves within 92--93\% during epoch 2, while abstention on the 81 null steps moves between 27 and 94\% (\cref{tab:gate-symptom}), and over-abstention also costs steps with a positive (81.5\% total accuracy at 93.8\% abstention).

\begin{table}[ht]\centering\small
\caption{Cross-task accuracy (\%) of periodic stage-2 checkpoints. Ranking is stable; abstention is not.}
\label{tab:gate-symptom}
\begin{tabular}{lccc}
\toprule
Checkpoint (gold labels, seed 7) & Accuracy, all steps & Ranking accuracy, steps with a positive & Abstention on null steps \\
\midrule
epoch 1, step 100 & 81.53 & 91.88 & 93.8 \\
epoch 1, step 200 & 86.61 & 91.00 & 27.2 \\
epoch 2, step 50 & 89.30 & 92.60 & 49.4 \\
epoch 2, step 150 & 89.60 & 92.52 & 80.2 \\
epoch 2, step 200 & 90.73 & 93.23 & 58.0 \\
\bottomrule
\end{tabular}
\end{table}

\subsection{The defect in the gate}
The gate computes $p(\bot)=\sigma\big(-(\max_i s_i-b)/\tau_{\text{cov}}\big)$ with $s_i=\langle z_N,\hat u_i\rangle$. Three facts combine. (i) The choice distribution $q=\mathrm{softmax}(s/\tau_{\text{dec}})$ is invariant to $s_i\mapsto s_i+c$, so the cross-entropy on $q$ has zero gradient along $c$: the overall score level is a flat direction of the choice loss. (ii) The candidate vectors of a request share a dominant direction $\mu$ that carries 98--99.8\% of their squared norm (measured on the stage-2 models), so $s_i\approx\langle z_N,\mu\rangle+\langle z_N,\hat u_i-\mu\rangle$ and the first term is exactly such a common level, moved as a side effect of every update to the state. Only the coverage loss constrains it: its gradient along a common shift $c$ is $(\rho-v)/\tau_{\text{cov}}$ per step ($v=1$ with a positive, 0 for a null step), so positive steps push the level up until $\rho$ saturates near 1 and the 412 null steps (5.3\% of training) push it down. (iii) The gate compares $\max_i s_i$ with a fixed $b$ at temperature $\tau_{\text{cov}}\approx0.14$, so a level shift of 0.1 moves the coverage logit by about 0.7. With a constant learning rate, the level is consistent with a weakly restored random walk, and the abstention rate follows it.

The checkpoints confirm each part. Recovering $\max_i s_i=b+\tau_{\text{cov}}\operatorname{logit}(1-p(\bot))$ from the predictions and the head parameters, across the 45 saved checkpoints of five runs (gold labels with seeds 42, 7, 13; distilled with seeds 42, 7), $b$ stays between 0.0061 and 0.0064 and $\tau_{\text{cov}}$ between 0.140 and 0.143, whereas the median best-candidate score on null steps ranges from $-0.41$ to $+0.74$. \Cref{tab:gate-r2} shows how much of the variation of $\operatorname{logit}$(abstention) each candidate cause explains. The level of the scores on null steps explains 90--92\%; the separation between null and positive steps, and the null AUROC, explain little. Within one run, the spread of abstention across epoch-2 checkpoints (sd 15.0 points for the gold-label runs, 11.0 for the distilled ones) exceeds the spread between the final weights of different seeds (7.7 and 6.1), so seed-to-seed differences in abstention are mostly the point at which a drifting level is frozen.

\begin{table}[ht]\centering\small
\caption{$R^2$ of $\operatorname{logit}$(abstention on cross-task null steps) with each candidate cause, over stage-2 checkpoints.}
\label{tab:gate-r2}
\begin{tabular}{lcccc}
\toprule
Runs & Score level on null steps & Null--positive score gap & Null AUROC & $\tau_{\text{cov}}$ \\
\midrule
Gold labels (25 checkpoints) & 0.92 & 0.15 & 0.06 & 0.00 \\
Distilled recipe (17 checkpoints) & 0.90 & 0.30 & 0.21 & 0.15 \\
\bottomrule
\end{tabular}
\end{table}

\subsection{Candidate remedies}
Every candidate score is recoverable from a saved prediction: $s_i=\max_j s_j+\tau_{\text{dec}}(\log q_i-\log\max_j q_j)$. We therefore compared gate statistics offline, each with one threshold shared by all checkpoints (pooled 5\% false abstention on cross-task steps with a positive), by the spread of the null abstention rate across the 24 epoch-2 and final stage-2 checkpoints.

\emph{Removing the level.} The shift-invariant statistic $\tau_{\text{dec}}\,\mathrm{KL}(U\,\|\,q)=\tau_{\text{dec}}\log\frac1K\sum_i e^{s_i/\tau_{\text{dec}}}-\frac1K\sum_i s_i$, where $U$ is uniform over the $K$ candidates, keeps most of the separation (AUROC 94.4 against 95.7) but makes the spread worse (sd 37.3 against 16.5 points): it is not invariant to the scale of the scores, which also drifts. \emph{Removing level and scale per request}, $(\max_i s_i-\bar s)/\mathrm{sd}_i(s_i)$, is stable but loses the separation (AUROC 78.6): by Samuelson's inequality it is bounded by $\sqrt{K-1}$, so a clear positive cannot stand out. \emph{Removing level and scale across requests}, $(\max_i s_i-\mu)/\bar\sigma$ with $\mu$ the median candidate mean and $\bar\sigma$ the median candidate standard deviation of unlabelled calibration requests, is a positive affine map of $\max_i s_i$ within a checkpoint and so keeps AUROC exactly.

A first comparison computed $\mu$ and $\bar\sigma$ on a random half of the records and evaluated on the other half, but drew the half anew for every checkpoint, so different checkpoints were scored on different null steps; even the uncalibrated gate's spread rose by 2.3 points under that protocol. \Cref{tab:gate-remedies} uses record-level 2-fold cross-fitting instead (folds fixed by record id, statistics for each fold from the other fold's unlabelled records, all 81 null steps scored) and a paired bootstrap over records (1{,}000 resamples, the same records for every checkpoint and gate). The across-request normalisation lowers the spread significantly. A two-level variant that additionally standardises each checkpoint's statistic by the mean and standard deviation of its central 80\% does not: aligning the bulk of the positive distribution does not align its 5\% tail, where the threshold sits.

\begin{table}[ht]\centering\small
\caption{Gate statistics on 25 cross-task stage-2 checkpoints with 2-fold cross-fitting: null AUROC (\%) and the standard deviation (points) of null abstention across checkpoints.}
\label{tab:gate-remedies}
\begin{tabular}{lccccc}
\toprule
Gate statistic & AUROC & Abstention sd & 95\% CI & Difference to current & 95\% CI \\
\midrule
Current: $\operatorname{logit} p(\bot)$, fixed threshold & 95.69 & 16.5 & [13.8, 19.5] & -- & -- \\
$(\max_i s_i-\mu)/\bar\sigma$ & 95.69 & 9.2 & [7.2, 12.7] & $-7.3$ & [$-8.9$, $-5.2$] \\
Median/MAD level, no alignment & 95.68 & 15.3 & [13.2, 18.4] & $-1.1$ & [$-3.4$, $+1.6$] \\
Median/MAD level, central-80\% alignment & 95.67 & 17.0 & [15.1, 20.0] & $+0.6$ & [$-1.2$, $+3.1$] \\
\midrule
Quantile threshold on $\operatorname{logit} p(\bot)$ ($q=0.906$) & 95.69 & 4.9 & & & \\
Per-checkpoint threshold set with labels & 95.69 & 5.4 & & & \\
\bottomrule
\end{tabular}
\end{table}

\subsection{The quantile threshold}
The labelled per-checkpoint threshold (each checkpoint at exactly 5\% false abstention) leaves a spread of 5.4 points, the variation in how well checkpoints separate null steps; the rest of the 16.5 is threshold drift. A label-free version of that threshold is a quantile: the model abstains when $\operatorname{logit} p(\bot)$ exceeds the $q$-quantile of $\operatorname{logit} p(\bot)$ over unlabelled calibration requests. Being a threshold on the model's own statistic, it leaves AUROC unchanged; with cross-fitting it reaches a spread of 4.9 points and a false-abstention spread of 0.3 points.

\emph{Choosing $q$.} The fraction of requests on which the model abstains is $1-q=(1-\pi)f+\pi r$, where $\pi$ is the share of null requests, $f$ the false-abstention rate and $r$ the recall on null steps, so $q$ fixes an operating point given the null share. We choose one $q$ for all models on cross-task, the split already used to select checkpoints and margins: bisection on $q$ until pooled false abstention over the 25 checkpoints is 5\% gives $q=0.906$ (cross-task: $\pi=6.1\%$, $r\approx79\%$, and $0.939\cdot5\%+0.061\cdot79\%\approx9.4\%=1-q$). A common $q$ fixes the abstention budget, not the false-abstention rate: at $q=0.906$ the cross-task false abstention is 5.0--5.2\% for the joint runs, 4.2--5.3\% for the state-conditioned models and 6.9\% for independent encoding, depending on their null recall. $q$ itself was chosen with cross-task labels; only the per-model thresholds are label-free. The cross-task false-abstention rate of 5\% is therefore true by construction; the evidence is the spread and the transfer below. The spread does not depend on the exact choice (\cref{tab:gate-q}): $q$ trades false abstention against recall, while the spread stays at 3--6 points. Without labels, $q$ can instead be set by an abstention budget.

\begin{table}[ht]\centering\small
\caption{The quantile level sets the operating point, not the stability (25 cross-task checkpoints, cross-fitted).}
\label{tab:gate-q}
\begin{tabular}{lcccccc}
\toprule
$q$ & 0.85 & 0.88 & 0.90 & 0.906 & 0.92 & 0.94 \\
\midrule
Abstention on null steps, mean (\%) & 88.9 & 85.6 & 80.8 & 79.1 & 74.4 & 63.6 \\
Abstention, sd across checkpoints & 3.3 & 3.8 & 4.7 & 4.9 & 5.3 & 4.3 \\
False abstention, mean (\%) & 10.3 & 7.4 & 5.6 & 5.0 & 3.8 & 2.3 \\
\bottomrule
\end{tabular}
\end{table}

\emph{Transfer.} \Cref{tab:gate-transfer} applies the gates to eleven final weights (stage-2 and post-abstention models of the gold-label recipe with seeds 42, 7 and 13 and of the distilled recipe of \cref{app:distill} with seeds 42 and 7, and the distilled model whose abstention stage starts from the gold-label selection) on cross-website and cross-domain, with every constant taken from cross-task. The quantile threshold cuts the spread of abstention by 61--65\%. Its false abstention on cross-website is 9.1\% rather than 5\%, because cross-website has fewer null steps (4.3\%) than cross-task (6.1\%) and harder positives: the quantile carries the calibration data's null share, which is the assumption to check in deployment. Computing the quantile on the target split itself (cross-fitted) brings it to 6.5\%.

\begin{table}[ht]\centering\small
\caption{Eleven final weights on the other two splits (\%): mean and standard deviation across models.}
\label{tab:gate-transfer}
\begin{tabular}{llcc}
\toprule
Split & Gate (constants from cross-task) & Abstention mean / sd & False abstention mean / sd \\
\midrule
Cross-website & Fixed threshold on $\operatorname{logit} p(\bot)$ & 79.1 / 8.4 & 9.7 / 4.5 \\
& Quantile threshold & 79.5 / 2.9 & 9.1 / 1.1 \\
Cross-domain & Fixed threshold on $\operatorname{logit} p(\bot)$ & 73.4 / 11.1 & 5.6 / 2.4 \\
& Quantile threshold & 76.4 / 4.3 & 5.6 / 0.8 \\
\bottomrule
\end{tabular}
\end{table}

The analysis also changes how recipes should be compared. Total accuracy mixes ranking with the position of a drifting threshold; ranking accuracy with the null option ignored and null AUROC do not depend on the threshold and vary several times less across seeds, while accuracy on steps with a positive does depend on it.

\section{Distillation from the Joint Encoder}
\label{app:distill}
Since the fine-tuned joint encoder is more accurate, we tried distilling it into the state-conditioned model. The teacher's option distribution on each training step, averaged over five candidate orders, is renormalised without its null slot, softened with temperature $T=2$ and mixed with the gold label in equal parts; the null target stays the gold label. Everything else follows the gold-label recipe, so the two recipes differ only in their targets. \Cref{tab:distill} compares them with the quantile threshold of \cref{app:gate}. Distillation gives no consistent gain: on cross-website it changes accuracy by $+0.79$ ($p=0.42$, exact McNemar) and $-0.69$ ($p=0.51$) for the two seeds, and on cross-domain by $-0.32$ ($p=0.45$) and $-0.94$ ($p=0.015$). It lowers the cross-domain null AUROC in both seeds ($-1.9$ and $-2.3$ points, 95\% bootstrap intervals $[-3.1,-0.8]$ and $[-3.3,-1.3]$); the drop is absent after stage 2 ($+0.3$ and $+0.8$, intervals including zero) and arises in the abstention stage, which for the distilled recipe changes accuracy by $+0.20$ ($p=0.90$) and $-2.65$ ($p=0.001$) on cross-website and by $-0.20$ and $-0.12$ on cross-domain, and lowers the cross-domain null AUROC by 1.7 and 1.3 points (intervals $[-2.5,-0.9]$ and $[-2.2,-0.4]$). These $p$-values and intervals are per test and resample steps; with tasks as the unit and Holm's adjustment within the distillation family, the cross-domain AUROC drops of the seed-7 final model and of the distilled seed-42 abstention stage remain significant and the other single tests above do not (\cref{app:cluster}). With the uncalibrated gate, the first seed showed a significant gain of 2.65 points on cross-website ($p=0.002$) that the second seed did not reproduce; it is an instance of the threshold drift of \cref{app:gate} combined with a low-scoring gold-label seed.

\begin{table}[ht]\centering\small
\caption{Distilled against gold-label targets for the state-conditioned recipe (final models, after the abstention stage).}
\label{tab:distill}
\begin{tabular}{lcccc}
\toprule
& \multicolumn{2}{c}{Accuracy, quantile threshold (\%)} & \multicolumn{2}{c}{Null AUROC (\%)} \\
\cmidrule(lr){2-3}\cmidrule(lr){4-5}
Recipe (seeds 42 / 7) & Website & Domain & Website & Domain \\
\midrule
Gold labels & 85.25 / 85.25 & 89.04 / 89.98 & 93.5 / 93.6 & 94.9 / 95.5 \\
Distilled & 86.04 / 84.56 & 88.72 / 89.04 & 94.3 / 94.6 & 93.0 / 93.2 \\
\bottomrule
\end{tabular}
\end{table}

\end{document}